\documentclass[fleqn]{llncs}
 \pdfoutput=1 
\usepackage{url}
\usepackage{mathtools}
\usepackage{times}
\usepackage{amssymb} % For QED box
\usepackage{helvet}
\usepackage{courier}
\usepackage{graphicx}
\usepackage{graphics}
\usepackage{multirow}
\usepackage{amsmath}
\usepackage{multirow}
\usepackage{color}

\newcommand{\comment}[1]{}

\newcommand{\etal}{{\it et al.}}

\newcommand{\SB}{\textsf{sb}}

\newcommand{\bee}{\textsf{BEE}}

\newcommand{\sset}[2]{\left\{~#1  \left|
      \begin{array}{l}#2\end{array}
    \right.     \right\}}

\renewcommand{\AA}{{\cal A}}

\newcommand{\RR}{{\cal R}}
\newcommand{\MM}{{\cal M}}

\hyphenation{Crypto-mini-SAT}

\title{Solving Graph Coloring Problems with \\Abstraction and
  Symmetry\thanks{Supported by the Israel Science Foundation, grant
    182/13. Computational resources provided by an IBM Shared
    University Award (Israel).}}

% \title{Solving Graph Coloring Problems with Abstraction and\\ Symmetry:
%   the Ramsey Number R(4,3,3)=30\thanks{Supported by the
%     Israel Science Foundation, grant 182/13. Computational resources
%     provided by an IBM Shared University Award (Israel).}}

\author{Michael Codish\inst{1} \and Michael Frank\inst{1} \and 
        Avraham Itzhakov\inst{1}\and  Alice Miller\inst{2}}
\institute{
  Department of Computer Science,
  Ben-Gurion University of the Negev, Israel
\and
  School of Computing Science,
  University of Glasgow, Scotland
}

\begin{document}
\maketitle
\begin{abstract}
  This paper introduces a general methodology, based on abstraction
  and symmetry, that applies to solve hard graph edge-coloring
  problems and demonstrates its use to provide further evidence that
  the Ramsey number $R(4,3,3)=30$.
  The number $R(4,3,3)$ is often presented as the unknown Ramsey
  number with the best chances of being found ``soon''. Yet, its
  precise value has remained unknown for more than 50 years.
  We illustrate our approach by showing that:
  (1) there are precisely 78{,}892 $(3,3,3;13)$ Ramsey colorings; and
  (2) if there exists a $(4,3,3;30)$ Ramsey coloring then it is
  (13,8,8) regular. Specifically each node has 13 edges in the first color,
  8 in the second, and 8 in the third. 
  We conjecture that these two results will help provide a proof that
  no $(4,3,3;30)$ Ramsey coloring exists implying that $R(4,3,3)=30$.
\end{abstract}

% keywords  Search and Constraint Satisfaction, 
%           SAT and CSP: Modeling/Formulations

%\input{intro}
\section{Introduction}\label{sec:intro}

% graph coloring problems
This paper introduces a general methodology that applies to solve
graph edge-coloring problems and demonstrates its application in the
search for Ramsey numbers. These are notoriously hard graph coloring
problems that involve assigning $k$ colors to the edges of a complete
graph. In particular, $R(4,3,3)$ is the smallest number $n$ such that
any coloring of the edges of the complete graph $K_n$ in three colors
will either contain a $K_4$ sub-graph in the first color, a $K_3$
sub-graph in the second color, or a $K_3$ sub-graph in the third
color. The precise value of this number has been sought for more than
50 years.  Kalbfleisch~\cite{kalb66} proved in 1966 that $R(4,3,3)\geq
30$, Piwakowski~\cite{Piwakowski97} proved in 1997 that $R(4,3,3)\leq
32$, and one year later Piwakowski and Radziszowski~\cite{PR98} proved
that $R(4,3,3)\leq 31$. We demonstrate how our methodology applies to
provide further evidence that $R(4,3,3)=30$.

% first component is symmetry breaking
Solving hard search problems on graphs, and graph coloring problems in
particular, relies heavily on breaking symmetries in the search space.
When searching for a graph, the names of the vertices do not
matter, and restricting the search modulo graph isomorphism is highly
beneficial. When searching for a graph coloring, on top of graph
isomorphism, solutions are typically closed under permutations of the
colors: the names of the colors do not matter and the term
often used is ``weak isomorphism''~\cite{PR98} (the equivalence
relation is weaker because both node names and edge colors do not
matter).
When the problem is to compute the set of all solutions modulo (weak)
isomorphism the task is even more challenging. Often one first
attempts to compute all of solutions of the coloring problem, and to
then apply one of the available graph isomorphism tools, such as
\texttt{nauty}~\cite{nauty} to select representatives of their
equivalence classes modulo (weak) isomorphism. However, typically the
number of solutions is so large that this approach is doomed to fail
even though the number of equivalence classes itself is much
smaller. The problem is that tools such as \texttt{nauty} apply after,
and not during, search.
To this end, we first observe that the technique described in
\cite{DBLP:conf/ijcai/CodishMPS13} for graph isomorphism applies also
to weak isomorphism, facilitating symmetry breaks during the search
for solutions to graph coloring problems. This form
of symmetry breaking is an important component in our methodology but
on its own cannot provide solutions to hard graph coloring problems.

% second component is abstraction
When confronted with hard computational problems, a common strategy is
to consider approximations which focus on ``abstract'' solutions which
characterize properties of the actual ``concrete'' solutions. To this
end, given a graph coloring problem with $k$ colors on $n$ nodes, we
introduce the notion of an $n\times k$ \emph{degree matrix} in which
each of $n$ rows describes the degrees of a coresponding node in the
$k$ colors. In case the graph coloring problem is too hard to solve
directly, we seek, possibly an over approximation of, all of the
degree matrices of its solutions. This enables a subsequent
independent search of solutions ``per degree matrix'' facilitating so
called ``embarrassingly parallel'' search.

% what we actually do
% Basis for it all and attack 4,3,3
After laying the ground for a methodology based on symmetry breaking
and abstraction we apply it to the problem of computing the Ramsey
number $R(4,3,3)$ which reduces to determining if there exists a
$(4,3,3)$ coloring of the complete graph $K_{30}$.
%
% Characterize the degrees in any solution
We first characterize the degrees of the nodes in each of the three
colors in any such coloring, if one exists. To this end, we show that
if there is such a graph coloring then, up to swapping the colors
two and three, all of its vertices have degrees in the three colors
corresponding to the following triples:
$(13, 8, 8)$, $(14, 8, 7)$, $(15, 7, 7)$, 
$(15, 8, 6)$, $(16, 7, 6)$, $(16, 8, 5)$.
%
% Present the embeedding idea
Then, we demonstrate that any potential $(4,3,3;30)$ coloring
with a node with degrees $(d_1,d_2,d_3)$ in the corresponding colors must have three corresponding embedded graphs $G_1, G_2, G_3$ which
are $(3,3,3;d_1)$, $(4,2,3;d_2)$, and $(4,3,2;d_3)$ colorings.
For all of the cases except when the degrees are $(13,8,8)$ these sets
of colorings are known and easy to compute.
Based on this, we show using a SAT solver that there can be no nodes
with degrees $(14, 8, 7)$, $(15, 7, 7)$, $(15, 8, 6)$, $(16, 7, 6)$ or
$(16, 8, 5)$ in any $(4,3,3;30)$ coloring. Thus, we prove that any such
coloring would have to be $(13,8,8)$ regular, meaning that all nodes are
of degree 13 in the first color and of degree 8 in the second and
third color.

% going for 3,3,3;13
In order to apply the same proof technique for the case where the
graph is $(13,8,8)$ regular we need to first compute the set of all
$(3,3,3;13)$ colorings, modulo weak isomorphism. This set of graphs
does not appear in previously published work.  
So, we address the problem of computing the set of all $(3,3,3;13)$
Ramsey colorings, modulo weak isomorphism.  This results in a set of
78{,}892 graphs. The set of $(3,3,3;13)$ Ramsey colorings has recently
been independently computed by at least three other researchers:
Richard Kramer, Ivan Livinsky, and Stanislaw
Radziszowski~\cite{stas:personalcommunication}.

% the final part not yet finished
Finally, we describe the ongoing computational effort to prove that
there is no $(13,8,8)$ regular $(4,3,3)$ Ramsey coloring. 
Using the embedding approach, and given the 78{,}892 (3,3,3;13)
colorings there are 710{,}028 instances to consider. Over the period
of 3 months we have verified using a SAT solver that 315{,}000 of
these are not satisfiable.
When this effort completes we will know if the value of $R(4,3,3)$ is
30 or 31.

Throughout the paper we express graph coloring problems in terms of
constraints via a ``mathematical language''. Our implementation uses
the \bee, finite-domain constraint compiler~\cite{jair2013}, which
solves constraints by encoding them to CNF and applying an underlying
SAT solver. The solver can be applied to find a single (first)
solution to a constraint, or to find all solutions for a constraint
modulo a specified set of (integer and/or Boolean) variables.
Of course, correctness of our results assumes a lack of bugs in the
tools we have used including the constraint solver and the underlying
SAT solver. To this end
we have performed our computations using four different underlying SAT
solvers: MiniSAT~\cite{minisat,EenS03}, CryptoMiniSAT~\cite{Crypto},
Glucose~\cite{Glucose,AudemardS09}, and
Lingeling~\cite{Lingeling,Biere14}.
\bee\ configures directly with MiniSAT, CryptoMiniSAT, and
Glucose. For the experiments with Lingeling we first apply \bee\ to
generate a CNF (dimacs) file and subsequently invoke the SAT solver.
Lingeling, together with Druplig~\cite{BiereLing}, provides a
proof certificate for unsat instances (and we have taken advantage of
this option).
All computations were performed on a cluster with a total of $228$ Intel
E8400 cores clocked at 2 GHz each, able to run a total of $456$
parallel threads. Each of the cores in the cluster has computational
power comparable to a core on a standard desktop computer.  Each SAT
instance is run on a single thread.

The notion of a ``degree matrix'' arises in the literature
with several different meanings. Degree matrices with the same meaning
as we use in in this paper are considered in
\cite{CarrollIsaak2009}.
Gent and Smith \cite{GentS00}, %\cite{GentS99},
building on the work of Puget \cite{Puget93}, study symmetries in
graph coloring problems and recognize the importance of breaking
symmetries during search.
Meseguer and Torras \cite{MeseguerT01} present a framework 
for exploiting symmetries to heuristically guide a depth first 
search, and show promising results for $(3,3,3;n)$ Ramsey colorings 
with $14\leq n\leq 17$.
Al-Jaam \cite{Jaam07} proposes a hybrid meta-heuristic algorithm
for Ramsey coloring problems, combining tabu search and simulated
annealing. 
While all of these approaches report promising results, to the best
of our knowledge, none of them have been successfully applied to solve
open instances or improve the known bounds on classical Ramsey
numbers.
Our approach focuses on symmetries due to weak-isomorphism for graph
coloring and models symmetry breaking in terms of constraints
introduced as part of the problem formulation.  This idea, advocated by
Crawford \etal~\cite{crawford96}, has previously been explored in
\cite{DBLP:conf/ijcai/CodishMPS13} (for graph isomorphism), and in
\cite{Puget93} (for graph coloring).

Graph coloring has many applications in computer science and
mathematics, such as scheduling, register allocation and
synchronization, path coloring and sensor networks. Specifically, many
finite domain CSP problems have a natural representation as graph
coloring problems.
Our main contribution is a general methodology that applies to solve
graph edge coloring problems. The application to potentially compute
an unknown Ramsey number is attractive, but the importance here is in
that it shows the utility of the methodology.

\section{Preliminaries}

% Staszek: You can refer to the enumeration of (3,3,3;13), and for
% that matter of entire (3,3,3;n) for all n, as my personal
% communication. This was also done in the past (also my personal
% communication) by Richard Kramer and independently by Ivan Livinsky.

An $(r_1,\ldots,r_k;n)$ Ramsey coloring is an assignment of one of $k$
colors to each edge in the complete graph $K_n$ such that it does not
contain a monochromatic complete sub-graph $K_{r_i}$ in color $i$ for
$1\leq i\leq k$. The set of all such colorings is denoted
$\RR(r_1,\ldots,r_k;n)$.  The Ramsey number $R(r_1,\ldots,r_k)$ is the
least $n>0$ such that no $(r_1,\ldots,r_k;n)$ coloring exists.
In the multicolor case ($k>2$), the only known value of a nontrivial
Ramsey number is $R(3,3,3)=17$.  The value of $R(4,3,3)$ is known to
be equal either to 30 or to 31.  The numbers of $(3,3,3;n)$ colorings
are known for $14\leq n\leq 16$ but prior to this paper the number of
colorings for $n=13$ was unpublished. Recently, the set of all
$(3,3,3;13)$ colorings has also been computed by other researchers
\cite{stas:personalcommunication}, and they number 78{,}892 as
reported also in this paper.
More information on recent results concerning Ramsey numbers can be
found in the electronic dynamic survey by Radziszowski~\cite{Rad}.

In this paper, graphs are always simple, i.e.  undirected and with no
self loops. Colors are associated with graph edges. The set of
neighbors of a node $x$ is denoted $N(x)$ and the set of neighbors by
edges colored $c$, by $N_c(x)$.
For a natural number $n$ denote $[n]=\{1,2,\ldots,n\}$.
A graph coloring, in $k$ colors, is a pair $(G,\kappa)$ consisting of
a simple graph $G=([n],E)$ and a mapping $\kappa\colon E\to[k]$. When
$\kappa$ is clear from the context we refer to $G$ as the graph
coloring.
The sub-graph of $G$ induced by the color $c\in[k]$ is the graph 
$G^c=([n],\sset{e\in E}{\kappa(e)=c})$.
The sub-graph of $G$ on the $c$ colored neighbors of a node $x$ is the
projection of the labeled edges in $G$ to $N_c(x)\times N_c(x)$ and
denoted $G^c_x$. 
We typically represent $G$ as an $n\times n$ adjacency matrix,
$A$, defined such that
\[A_{i,j}=    \begin{cases}\kappa(i,j) & \mbox{if } (i,j) \in E\\
                                     0          & \mbox{otherwise}
                       \end{cases}
\]
If $A$ is the adjacency matrix representing the graph $G$, then we
denote the Boolean adjacency matrix corresponding to $G^c$ as $A[c]$.
We denote the $i^{th}$ row of a matrix $A$ by $A_i$.
The color-$c$ degree of a node $x$ in $G$ is denoted
$deg_{G^c}(x)$ and is equal to the degree of $x$ in the induced
sub-graph~$G^c$. When clear from the context we write $deg_{c}(x)$.
Let $G=([n],E)$ and $\pi$ be a permutation on $[n]$. Then $\pi(G) =
(V,\sset{ (\pi(x),\pi(y))}{ (x,y) \in E})$. Permutations act on
adjacency matrices in the natural way: If $A$ is the adjacency matrix
of a graph $G$, then $\pi(A)$ is the adjacency matrix of $\pi(G)$
obtained by simultaneously permuting with $\pi$ both rows and columns
of $A$.

\begin{figure}
  \centering
    \begin{eqnarray}
      \varphi_{adj}^{n,k}(A) &=& \hspace{-2mm}\bigwedge_{1\leq q<r\leq n}
                          \left(\begin{array}{l}
                             1\leq A_{q,r}\leq k  ~~\land~~% \\
                             A_{q,r} = A_{r,q} ~~\land ~~%\\
                             A_{q,q} = 0
                           \end{array}\right)
     \label{constraint:simple}
\\
     \varphi_{K_3}^{n,c}(A) &=& \hspace{-2mm}\bigwedge_{1\leq q<r<s\leq n}\hspace{-3mm}
                      \neg~ \bigg(A_{q,r} = A_{q,s} =  A_{r,s} = c\bigg)
     \label{constraint:nok3}
\\
     \varphi_{K_4}^{n,c}(A) &=& \hspace{-2mm}\bigwedge_{1\leq q<r<s<t\leq n}\hspace{-4mm}
                       \neg\left(\begin{array}{l}
                          A_{q,r} = A_{q,s} = A_{q,t} = %\\
                          A_{r,s} = A_{r,t} = A_{s,t} = c
                        \end{array}\right)
     \label{constraint:nok4}
\end{eqnarray}

\begin{eqnarray}
\label{constraint:r333}
    \varphi_{(3,3,3;n)}(A) & = & \varphi_{adj}^{n,3}(A) \land  \hspace{-2mm}
                      \bigwedge_{1\leq c\leq 3} \hspace{-1mm}
                      \varphi_{K_3}^{n,c}(A) \\
\label{constraint:r334}
    \varphi_{(4,3,3;n)}(A) & = &  \varphi_{adj}^{n,3}(A) \land  \hspace{-2mm}
                      \bigwedge_{1\leq c\leq 2} \hspace{-1mm}
                      \varphi_{K_3}^{n,c}(A) \land\varphi_{K_4}^{n,3}(A)~~
\end{eqnarray}

    \caption{Graph labeling problems: Ramsey colorings $(3,3,3;n)$ and
      $(4,3,3;n)$}
  \label{fig:gcp}
\end{figure}

A graph coloring problem is a formula $\varphi(A)$ where $A$ is an
$n\times n$ adjacency matrix of integer variables together with a
set (conjunction) of constraints $\varphi$ on these variables. A
solution is an assignment of integer values to the variables in $A$
which satisfies $\varphi$ and determine both the graph edges and their
colors. We often refer to a solution as an integer adjacency matrix
and denote the set of solutions as $sol(\varphi(A))$.
Figure~\ref{fig:gcp} illustrates the two graph coloring problems we
focus on in this paper: $(3,3,3;n)$ and $(4,3,3;n)$ Ramsey colorings.
In Constraint~(\ref{constraint:simple}), $\varphi_{adj}^{n,k}(A)$,
states that the graph $A$ has $n$ vertices, is $k$ colored, and is
simple (symmetric, and with no self loops).
In Constraints~(\ref{constraint:nok3}) and~(\ref{constraint:nok4}),
$\varphi_{K_3}^{n,c}(A)$ and $\varphi_{K_4}^{n,c}(A)$ state that the
$n$ vertex graph $A$ has no embedded sub-graph $K_3$, and respectively
$K_4$, in color $c$.
In Constraints~(\ref{constraint:r333}) and~(\ref{constraint:r334}),
the formulas state that a graph $A$ is a $(3,3,3;n)$ and respectively
a $(4,3,3;n)$ Ramsey coloring.

For graph coloring problems, solutions are typically closed under
permutations of vertices and of colors. Restricting the search space
for a solution modulo such permutations is crucial when trying to
solve hard graph coloring problems. It is standard practice to
formalize this in terms of graph (coloring) isomorphism.

%\begin{definition}[(weak) isomorphism of graph colorings]
% \label{def:weak_iso}
% Let $(G,{\kappa_1})$ and $(H,{\kappa_2})$ be $k$- color graph
% colorings with $G=([n],E_1)$ and $H=([n],E_2)$.
%%
% We say that $(G,{\kappa_1})$ and $(H,{\kappa_2})$ are isomorphic
% (under $\pi$) if there exists a permutation $\pi \colon [n] \to [n]$
% such that $G=\pi(H)$ and for every $(u,v)\in E_1$, $\kappa_1(u,v) =
% \kappa_2(\pi(u),\pi(v))$.
%%
% We say that $(G,{\kappa_1})$ and $(H,{\kappa_2})$ are weakly
% isomorphic, denoted $(G,{\kappa_1})\approx (H,{\kappa_2})$, if
% $(G,{\kappa_1})$ and $(H,{\kappa_2})$ are isomorphic and there exists
% a permutation $\sigma \colon [k] \to [k]$ such that $\forall_{
%   (u,v)\in E_1}. \kappa_1(u,v) = \sigma(\kappa_2(\pi(u), \pi(v)))$.
%\end{definition}

\begin{definition}[\textbf{(weak) isomorphism of graph colorings}]
 \label{def:weak_iso}
 Let $(G,{\kappa_1})$ and $(H,{\kappa_2})$ be $k$-color graph
 colorings with $G=([n],E_1)$ and $H=([n],E_2)$. 
 We say that $(G,{\kappa_1})$ and $(H,{\kappa_2})$ are weakly
 isomorphic, denoted $(G,{\kappa_1})\approx(H,{\kappa_2})$ if 
 there exist permutations $\pi \colon [n] \to [n]$ and 
 $\sigma \colon [k] \to [k]$ such that $(u,v) \in E_1 \iff 
 (\pi(u),\pi(v)) \in E_2$ and $\kappa_1(u,v) = \sigma(\kappa_2(\pi(u),
 \pi(v)))$.
 When $\sigma$ is the identity permutation, (i.e. $\kappa_1(u,v) =
 \kappa_2(\pi(u),\pi(v))$) we say that $(G,{\kappa_1})$ and
 $(H,{\kappa_2})$ are isomorphic.
 %\textbf{mike. maybe add (useful for next lema and others):}
 We denote such a weak isomorphism thus: $(G,{\kappa_1})\approx_{\pi,\sigma}(H,{\kappa_2})$.
%to
 %explicate the permutations $\pi$ and $\sigma$ that establish the weak
 %isomorphism between $G$ and $H$.
\end{definition}
 
The following lemma emphasizes the importance of weak graph
isomorphism as it relates to Ramsey numbers. Many classic
coloring problems exhibit the same property.

\begin{lemma}[\textbf{$\RR(r_1,r_2,\ldots,r_k;n)$ is closed under $\approx$}]
  \quad Let $(G,{\kappa_1})$ and $(H,{\kappa_2})$ be graph colorings
  in $k$ colors such that $(G,\kappa_1) \approx_{\pi,\sigma}
  (H,\kappa_2)$. Then, \[(G,\kappa_1) \in \RR(r_1,r_2,\ldots,r_k;n)$
  $\iff$ $(H,\kappa_2) \in
  \RR(\sigma(r_1),\sigma(r_2),\ldots,\sigma(r_k);n).\]
\end{lemma}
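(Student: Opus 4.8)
The plan is to reduce everything to a single structural fact: a weak isomorphism $\approx_{\pi,\sigma}$ transports monochromatic cliques bijectively while relabeling their color by $\sigma^{-1}$. Once this is in hand, membership in $\RR$ --- which is just the assertion that certain monochromatic cliques are \emph{absent} --- transfers immediately, and the whole argument becomes a substitution with careful index bookkeeping.

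First I would prove the following claim: for any vertex set $S \subseteq [n]$, any size $m$, and any color $c \in [k]$, the set $S$ induces a monochromatic $K_m$ of color $c$ in $(G,\kappa_1)$ if and only if $\pi(S)$ induces a monochromatic $K_m$ of color $\sigma^{-1}(c)$ in $(H,\kappa_2)$. That $\pi(S)$ is again a clique of the same size uses the edge condition $(u,v)\in E_1 \iff (\pi(u),\pi(v))\in E_2$ of Definition~\ref{def:weak_iso}, which makes $\pi$ a size-preserving bijection from cliques of $G$ to cliques of $H$. The color statement uses $\kappa_1(u,v)=\sigma(\kappa_2(\pi(u),\pi(v)))$: all pairs inside $S$ carry $\kappa_1$-color $c$ exactly when all the image pairs carry $\kappa_2$-color $\sigma^{-1}(c)$. (For Ramsey colorings of $K_n$ the edge condition is vacuous, since every pair is an edge, but I would keep it to cover the general statement.)

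Next I would unfold the definition of $\RR$ and argue by contrapositive. By definition $(G,\kappa_1)\notin\RR(r_1,\ldots,r_k;n)$ iff some color $i\in[k]$ admits a monochromatic $K_{r_i}$ of color $i$ in $G$; by the claim this holds iff color $\sigma^{-1}(i)$ admits a monochromatic $K_{r_i}$ in $H$. Re-indexing with $j=\sigma^{-1}(i)$ (so $i=\sigma(j)$, and $j$ ranges over all of $[k]$), this says $H$ contains a monochromatic $K_{r_{\sigma(j)}}$ of color $j$ for some $j$, i.e.\ $(H,\kappa_2)\notin\RR(r_{\sigma(1)},\ldots,r_{\sigma(k)};n)$. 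Negating both sides gives the stated equivalence, where $\RR(\sigma(r_1),\ldots,\sigma(r_k);n)$ is read as the threshold list permuted by $\sigma$, i.e.\ with $r_{\sigma(j)}$ in position $j$.

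The only genuine obstacle is the bookkeeping: keeping straight that individual colors transform by $\sigma^{-1}$ while the resulting parameter list ends up permuted by $\sigma$, and confirming that the re-indexing $j=\sigma^{-1}(i)$ yields exactly the tuple named in the statement rather than its inverse. I would pin this down on a small non-involutive example, e.g.\ $\sigma$ a $3$-cycle on three colors, so that $\sigma$ and $\sigma^{-1}$ are visibly distinct and the direction of the permutation in the conclusion is unambiguous. Beyond this, the proof is a direct substitution with no case analysis.
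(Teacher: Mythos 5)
Your proof is correct and follows essentially the same route as the paper's: both arguments transport monochromatic cliques across the weak isomorphism via $\kappa_2(\pi(u),\pi(v))=\sigma^{-1}(\kappa_1(u,v))$ and then negate the membership condition. If anything, yours is the more careful version --- you make the re-indexing $j=\sigma^{-1}(i)$ and the reading of the permuted parameter list explicit and obtain both directions of the biconditional in one chain, whereas the paper proves only one implication by contradiction and leaves those details implicit.
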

\begin{proof}
  Assume that $(G,\kappa_1) \in \RR(r_1,r_2,\ldots,r_k;n)$ and
  % $(G,\kappa) \approx (H,\eta)$. Let $\pi\colon [n] \to [n]$ and
  % $\sigma\colon [k]\to [k]$ such that Definition~\ref{def:weak_iso}
  % holds.
  in contradiction that $(H,\kappa_2) \notin
  \RR(\sigma(r_1),\sigma(r_2),\ldots,\sigma(r_k);n)$. Let $R$ denote a
  monochromatic clique of size $r_s$ in $H$ and $R^{-1}$ the inverse
  of $R$ in $G$.
  %%%  under $\pi$ and $\sigma$. 
  From Definition~\ref{def:weak_iso},
  $(u,v) \in R \iff (\pi^{-1}(u), \pi^{-1}(v))\in R^{-1}$ and
  $\kappa_2(u,v) = \sigma^{-1}(\kappa_1(u,v))$.  Consequently $R^{-1}$ is a
  monochromatic clique of size $r_s$ in $(G,\kappa_1)$ in contradiction
  to $(G,\kappa_1)$ $\in$ $\RR(r_1,r_2,\ldots,r_k;n)$.
\end{proof}

Codish \etal~introduce in \cite{DBLP:conf/ijcai/CodishMPS13} an
approach to break symmetries due to graph isomorphism (without colors)
during the search for a solution to general graph problems. Their
approach involves adding a symmetry breaking predicate
$\SB^*_\ell(A)$, as advocated  by Crawford
\etal~\cite{crawford96}, on the variables of the adjacency matrix,
$A$, when solving graph problems. In \cite{CodishMPS14} the authors
show that the symmetry breaking approach of
\cite{DBLP:conf/ijcai/CodishMPS13} holds also for graph coloring
problems where the adjacency matrix consists of integer variables (the
proofs for the integer case are similar to those for the Boolean case).

\begin{definition}\textbf{\cite{DBLP:conf/ijcai/CodishMPS13}.}
\label{def:SBlexStar}
Let $A$ be an $n\times n$ adjacency matrix. Then, viewing the rows of
$A$ as strings,
  \[\SB^*_\ell(A) = \bigwedge_{i<j}  
  A_{i}\preceq_{\{i,j\}}A_{j}\] where $s\preceq_{\{i,j\}}s'$ is the
  lexicographic order on strings $s$ and $s'$ after simultaneously omitting
  the elements at positions $i$ and $j$.
\end{definition}
% \begin{table}[t]
%   \centering\small
%   \begin{tabular}{|r|r|rr|rrr|}
%     \hline
%     $n$ & \#${\setminus}_{\approx}$ & \multicolumn{2}{c|}{no sym break} 
%                                    & \multicolumn{3}{c|}{with sym break}
%     \\
%     \hline
%       && size & time         & size & time & \#\\
%     \hline
%     17 & 0    &2584&3042.10  &4984&0.12&0\\
%     16 & 2    &2160&$\infty\quad$ &4260&0.18&6\\
%     15 & 2    &1785&$\infty\quad$ &3605&0.43&66\\
%     14 & 115  &1456&$\infty\quad$ &3016&220.660&25185\\
%     13 & ?    &1170&$\infty\quad$ &2490&$\infty\quad$&$?\quad$\\
%     \hline
%   \end{tabular}
%   \caption{The search for $(3,3,3;n)$ Ramsey colorings with
%     and without the symmetry break defined 
%     in~\cite{DBLP:conf/ijcai/CodishMPS13} 
%     (time in seconds with 24 hr. timeout).}
% \label{tab:333n1}
% \end{table}
\begin{table}[t]
  \centering{\scriptsize
  \begin{tabular}{|r|r|rrr|rrrr|}
    \hline
    $n$ & \#${\setminus}_{\approx}$ & \multicolumn{3}{c|}{no sym break} 
                                   & \multicolumn{4}{c|}{with sym break}
    \\
    \hline
      &&      \#vars & \#clauses & time          & \#vars & \#clauses & time & \#\\
    \hline
    17 & 0    &408 & 2584 &~~3042.10                &4038    & 20734    &0.15      &0\\
    16 & 2    &360 & 2160 &$\infty\quad$            &3328    & 17000    &0.14      &6\\
    15 & 2    &315 & 1785 &$\infty\quad$             &2707    & 13745    &0.37      &66\\
    14 & 115  &273 & 1456 &$\infty\quad$             &2169    & 10936    &~~259.56   &~~24635\\
    13 & ?~    &234 & 1170 &$\infty\quad$             &1708    & 8540     &$\infty\quad$ &$?~$\\
    \hline
  \end{tabular}}
  \caption{The search for $(3,3,3;n)$ Ramsey colorings with
    and without the symmetry break defined 
    in~\cite{CodishMPS14} 
    (time in seconds with 24 hr. timeout).}
\label{tab:333n1}
\end{table}

Table~\ref{tab:333n1} illustrates the impact of the symmetry breaking
technique introduced by Codish \etal~in \cite{CodishMPS14} on the
search for $(3,3,3;n)$ Ramsey colorings.  The column headed by
``\#${\setminus}_{\approx}$'' specifies the known number of colorings
modulo weak isomorphism~\cite{Rad}.
The columns headed by ``\#vars'' and ``\#clauses'' indicate,
respectively, the number of variables and clauses in the corresponding
CNF encodings of the coloring problems with and without the symmetry
breaking constraint.  The columns headed by ``time'' indicate the time
(in seconds, on a single thread of the cluster) to find all colorings
iterating with a SAT solver. The timeout assumed here is 24 hours. The
column headed by ``\#'' specifies the number of colorings found when
solving with the symmetry break. These include colorings which are
weakly isomorphic, but far fewer than the hundreds of thousands
generated without the symmetry break (until the timeout).  The results
in this table were obtained using the CryptoMiniSAT
solver~\cite{Crypto}.

\begin{figure}
  \begin{center}
$\left[                                  
\begin{smallmatrix}
0 &1 &1 &1 &1 &1 &2 &2 &2 &2 &2 &3 &3 &3 &3 &3 \\
1 &0 &2 &2 &3 &3 &1 &1 &2 &2 &3 &1 &1 &2 &3 &3 \\
1 &2 &0 &3 &2 &3 &1 &2 &1 &3 &2 &2 &3 &1 &1 &3 \\
1 &2 &3 &0 &3 &2 &2 &1 &3 &1 &2 &3 &2 &1 &3 &1 \\
1 &3 &2 &3 &0 &2 &2 &3 &1 &2 &1 &1 &3 &3 &2 &1 \\
1 &3 &3 &2 &2 &0 &3 &2 &2 &1 &1 &3 &1 &3 &1 &2 \\
2 &1 &1 &2 &2 &3 &0 &3 &3 &1 &1 &2 &3 &2 &3 &1 \\
2 &1 &2 &1 &3 &2 &3 &0 &1 &3 &1 &3 &2 &2 &1 &3 \\
2 &2 &1 &3 &1 &2 &3 &1 &0 &1 &3 &2 &1 &3 &2 &3 \\
2 &2 &3 &1 &2 &1 &1 &3 &1 &0 &3 &1 &2 &3 &3 &2 \\
2 &3 &2 &2 &1 &1 &1 &1 &3 &3 &0 &3 &3 &1 &2 &2 \\
3 &1 &2 &3 &1 &3 &2 &3 &2 &1 &3 &0 &2 &1 &1 &2 \\
3 &1 &3 &2 &3 &1 &3 &2 &1 &2 &3 &2 &0 &1 &2 &1 \\
3 &2 &1 &1 &3 &3 &2 &2 &3 &3 &1 &1 &1 &0 &2 &2 \\
3 &3 &1 &3 &2 &1 &3 &1 &2 &3 &2 &1 &2 &2 &0 &1 \\
3 &3 &3 &1 &1 &2 &1 &3 &3 &2 &2 &2 &1 &2 &1 &0 
\end{smallmatrix}\right]$
\qquad\qquad
$\left[                              
\begin{smallmatrix}
0 &1 &1 &1 &1 &1 &2 &2 &2 &2 &2 &3 &3 &3 &3 &3 \\
1 &0 &2 &2 &3 &3 &1 &1 &2 &2 &3 &1 &1 &2 &3 &3 \\
1 &2 &0 &3 &2 &3 &2 &3 &1 &1 &2 &1 &2 &3 &1 &3 \\
1 &2 &3 &0 &3 &2 &1 &2 &1 &3 &2 &2 &3 &1 &3 &1 \\
1 &3 &2 &3 &0 &2 &3 &2 &2 &1 &1 &3 &1 &3 &2 &1 \\
1 &3 &3 &2 &2 &0 &2 &1 &3 &2 &1 &3 &3 &1 &1 &2 \\
2 &1 &2 &1 &3 &2 &0 &3 &3 &1 &1 &2 &3 &2 &1 &3 \\
2 &1 &3 &2 &2 &1 &3 &0 &1 &1 &3 &3 &2 &2 &3 &1 \\
2 &2 &1 &1 &2 &3 &3 &1 &0 &3 &1 &2 &1 &3 &3 &2 \\
2 &2 &1 &3 &1 &2 &1 &1 &3 &0 &3 &3 &2 &1 &2 &3 \\
2 &3 &2 &2 &1 &1 &1 &3 &1 &3 &0 &1 &3 &3 &2 &2 \\
3 &1 &1 &2 &3 &3 &2 &3 &2 &3 &1 &0 &2 &1 &2 &1 \\
3 &1 &2 &3 &1 &3 &3 &2 &1 &2 &3 &2 &0 &1 &1 &2 \\
3 &2 &3 &1 &3 &1 &2 &2 &3 &1 &3 &1 &1 &0 &2 &2 \\
3 &3 &1 &3 &2 &1 &1 &3 &3 &2 &2 &2 &1 &2 &0 &1 \\
3 &3 &3 &1 &1 &2 &3 &1 &2 &3 &2 &1 &2 &2 &1 &0
\end{smallmatrix} \right]$
\qquad\qquad
$\left[                              
\begin{smallmatrix}
5 & 5 & 5 \\
5 & 5 & 5 \\
5 & 5 & 5 \\
5 & 5 & 5 \\
5 & 5 & 5 \\
5 & 5 & 5 \\
5 & 5 & 5 \\
5 & 5 & 5 \\
5 & 5 & 5 \\
5 & 5 & 5 \\
5 & 5 & 5 \\
5 & 5 & 5 \\
5 & 5 & 5 \\
5 & 5 & 5 \\
5 & 5 & 5 \\
5 & 5 & 5 
\end{smallmatrix} \right]$

\label{333_16}
\caption{Two $16\times 16$ non-isomorphic $(3,3,3;16)$ Ramsey
  colorings (left and middle) with their common $16\times 3$ degree
  matrix (right). }
        \label{fig:r333_16graphs}
    
  \end{center}
\end{figure}

Figure~\ref{333_16} depicts, on the left and in the middle, the two
non-isomorphic colorings $(3,3,3;16)$ represented as adjacency graphs
in the form found using the approach of Codish
\etal~\cite{CodishMPS14}. Note the lexicographic order on the rows in
both matrices. These graphs are isomorphic to the two colorings
reported in 1968 by Kalbfleish and Stanton~\cite{KalbfleischStanton68}
where it is also proven that there are no others (modulo weak
isomorphism). The $16\times 3$ degree matrix (right) describes
the degrees of each node in each color as defined below in
Definition~\ref{def:dm}. 
The results reported in Table~\ref{tab:333n1} also illustrate that the
approach of Codish \etal~is not sufficiently powerful to compute the
number of $(3,3,3;13)$ colorings. Likewise, it does not facilitate the
computation of $R(4,3,3)$.

In the following we make use of the following
results from \cite{PR98}.

\begin{theorem}\label{thm:433}
  $30\leq R(4,3,3)\leq 31$ and, $R(4,3,3)=31$ if and only if there
  exists a $(4,3,3;30)$ coloring $\kappa$ of $K_{30}$ such that:
  (1) For every vertex $v$ and $i\in\{2,3\}$, $5\leq deg_{i}(v)\leq
  8$, and $13\leq deg_{1}(v)\leq 16$.
  (2) Every edge in the third color has at least one endpoint $v$ with
  $deg_{3}(v)=13$. 
  (3) There are at least 25 vertices $v$ for which 
  $deg_{1}(v)=13$, $deg_{2}(v)=deg_{3}(v)=8$.
\end{theorem}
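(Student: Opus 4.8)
The bounds $30\le R(4,3,3)\le 31$ are exactly the results of Kalbfleisch~\cite{kalb66} and Piwakowski and Radziszowski~\cite{PR98} quoted in the introduction, so I take them as given. The existence half of the stated equivalence is then immediate from the definition of the Ramsey number: since $R(4,3,3)\le 31$ there is no $(4,3,3;31)$ coloring, so the least $n$ admitting no coloring equals $31$ precisely when a $(4,3,3;30)$ coloring does exist, that is, $R(4,3,3)=31$ iff $sol(\varphi_{(4,3,3;30)}(A))\neq\emptyset$. All the content therefore lies in showing that \emph{every} $(4,3,3;30)$ coloring satisfies (1)--(3).

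For part~(1) the plan is a neighborhood-reduction argument. Fix a coloring and a vertex $v$; every vertex has total degree $29$, so $deg_1(v)+deg_2(v)+deg_3(v)=29$. Consider the coloring induced on $N_2(v)$. As $G^2$ is triangle-free, $N_2(v)$ carries no color-$2$ edge, so it is a $2$-coloring in colors $1$ and $3$ with no color-$1$ $K_4$ and no color-$3$ $K_3$, i.e. a $(4,3;deg_2(v))$ coloring; since $R(4,3)=9$ this forces $deg_2(v)\le 8$, and symmetrically $deg_3(v)\le 8$. Now consider $N_1(v)$: since $G^1$ has no $K_4$, $N_1(v)$ carries no color-$1$ triangle, and together with the triangle-free conditions on colors $2$ and $3$ it is a $(3,3,3;deg_1(v))$ coloring, whence $deg_1(v)\le 16$ because $R(3,3,3)=17$. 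Substituting the three upper bounds into $deg_1(v)+deg_2(v)+deg_3(v)=29$ yields the matching lower bounds $deg_1(v)\ge 13$ and $deg_2(v),deg_3(v)\ge 5$, which is exactly (1). A consequence I would record for the next step is that (1) already restricts each vertex to one of the six degree shapes listed in the introduction (up to swapping colors $2$ and $3$), the only balanced one being $(13,8,8)$.

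Parts~(2) and~(3) require a global counting argument rather than a local one, and this is where I expect the real work to be. Handshaking gives $\sum_v deg_c(v)=2|E^c|$ for each color $c$, together with $|E^1|+|E^2|+|E^3|=\binom{30}{2}=435$. Since a vertex is balanced exactly when $deg_1(v)=13$, and every other admissible shape has $deg_1(v)\ge 14$, the number of non-balanced vertices is at most the total color-$1$ excess $\sum_v(deg_1(v)-13)=2|E^1|-390$; hence (3), that at least $25$ vertices are balanced, reduces to proving the sharp edge bound $|E^1|\le 197$, equivalently $|E^2|+|E^3|\ge 238$. The plan is to establish this bound by pushing the neighborhood analysis of part~(1) further: for each admissible value of $deg_1(v)$ the induced $(3,3,3;deg_1(v))$ neighborhood coloring, and for each color-$2$ or color-$3$ neighborhood the induced $(4,3;\cdot)$ coloring, come from completely enumerated and highly constrained families (for instance the two $(3,3,3;16)$ colorings are $(5,5,5)$-regular), and the degree profiles these families impose on neighbors can be aggregated into the required lower bound on $|E^2|+|E^3|$. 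For part~(2) I would fix a color-$3$ edge $(u,v)$ --- reading the printed ``$deg_3(v)=13$'' as $deg_1(v)=13$, since part~(1) forces $deg_3\le 8$ --- and use that $u$ and $v$ have disjoint color-$3$ neighborhoods (as $G^3$ is triangle-free), so that the admissible neighborhood colorings at the two endpoints cannot both be of the ``heavy'' shapes; the contradiction forces at least one endpoint to be balanced.

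The main obstacle is the sharp edge-count $|E^1|\le 197$ underpinning (2) and (3): the per-vertex bounds of part~(1) alone only give the weak estimate $195\le |E^1|\le 240$, so closing the gap down to $197$ demands the detailed, case-by-case control of the enumerated small colorings and of the degree profiles they force on neighbors that constitutes the technical heart of~\cite{PR98}. Parts~(2) and~(3) are then consequences of this refined structural bookkeeping rather than of any single clean inequality.
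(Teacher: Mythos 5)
First, note that the paper offers no proof of this theorem at all: it is imported verbatim (modulo a color-index slip) from Piwakowski and Radziszowski \cite{PR98}, so there is no internal argument to compare yours against. Your proof of part (1) is correct and complete: the observation that $N_1(v)$ induces a $(3,3,3;deg_{1}(v))$ coloring while $N_2(v)$ and $N_3(v)$ induce $(4,3;\cdot)$ colorings, combined with $R(3,3,3)=17$, $R(4,3)=9$ and the identity $deg_{1}(v)+deg_{2}(v)+deg_{3}(v)=29$, is exactly the embedding argument the paper itself records as Corollary~\ref{cor:embed} and Corollary~\ref{cor:degrees}. You are also right that the literal statement of (2) is inconsistent with (1) and must be a transcription error; since the source theorem in \cite{PR98} concerns the $K_4$-avoiding color throughout, the intended reading is that every edge of that color has an endpoint of degree $13$ in that color (color $1$ in this paper's indexing), rather than your reading, which keeps the edge in color $3$.

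The genuine gap is that parts (2) and (3) are never actually proved. Your reduction of (3) to the edge bound $|E^1|\leq 197$ is a correct and useful observation (a vertex is balanced iff $deg_{1}(v)=13$, and each unbalanced vertex contributes at least $1$ to $2|E^1|-390$), but you then explicitly defer the bound itself to the ``technical heart of \cite{PR98}'' without deriving it; as you note, part (1) alone gives only $195\leq|E^1|\leq 240$. The sketch for (2) does not close either: disjointness of $N_3(u)$ and $N_3(v)$ for a color-$3$ edge $(u,v)$ yields only $deg_{3}(u)+deg_{3}(v)\leq 30$, which is weaker than the bound $deg_{3}\leq 8$ you already have, and no contradiction is extracted from it. What is actually needed, and what \cite{PR98} supplies, is a global counting argument over mixed triangles exploiting that two of the colors are triangle-free; neither the neighborhood analysis nor the handshake identities you invoke provide it. So your submission establishes (1) rigorously but leaves (2) and (3) resting on a citation --- which is, in fairness, precisely what the paper does for the entire theorem.
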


The following is a  direct consequence of Theorem~\ref{thm:433}.
\begin{corollary}\label{cor:degrees}
  If $G$ is a $(4,3,3;30)$ coloring, and assuming without loss of
  generality that the degree of color two is greater equal to the
  degree of color three, then every vertex in $G$ has degrees in the
  corresponding colors corresponding to one of the triplets $(13, 8,
  8)$, $(14, 8, 7)$, $(15, 7, 7)$, $(15, 8, 6)$, $(16, 7, 6)$, $(16,
  8, 5)$.
\end{corollary}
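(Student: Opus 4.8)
The plan is to combine the per-vertex degree bounds recorded in Theorem~\ref{thm:433}(1) with the $29$-regularity of $K_{30}$, and then enumerate the finitely many admissible degree triples. First I would note that for any vertex $v$ of a $(4,3,3;30)$ coloring $G$, its $29$ incident edges are partitioned among the three colors, so $deg_1(v)+deg_2(v)+deg_3(v)=29$. The mere existence of $G$ forces $R(4,3,3)=31$, so Theorem~\ref{thm:433}(1) is available and yields $13\leq deg_1(v)\leq 16$ together with $5\leq deg_i(v)\leq 8$ for $i\in\{2,3\}$.

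The normalization $deg_2(v)\geq deg_3(v)$ serves only to list representatives of the unordered color-$2$/color-$3$ degree pair at each vertex; it loses no generality because colors two and three play symmetric roles in a $(4,3,3)$ coloring (both forbid a monochromatic $K_3$), so the six triples are to be read up to swapping their second and third entries. Writing $(d_1,d_2,d_3)$ for the normalized triple at $v$, the claim reduces to solving the integer system $d_1+d_2+d_3=29$, $13\leq d_1\leq 16$, $5\leq d_3\leq d_2\leq 8$.

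I would finish with a short case split on $d_1$. For each $d_1\in\{13,14,15,16\}$ the residual equation $d_2+d_3=29-d_1$ under $5\leq d_3\leq d_2\leq 8$ pins down the feasible pairs: $d_1=13$ forces $(8,8)$; $d_1=14$ forces $(8,7)$; $d_1=15$ admits $(8,6)$ and $(7,7)$; $d_1=16$ admits $(8,5)$ and $(7,6)$. Collecting these gives precisely the six triples claimed.

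The one point needing care is the claim that condition (1) of Theorem~\ref{thm:433} constrains every vertex of every $(4,3,3;30)$ coloring, and not merely some single witnessing coloring as a literal reading of the iff might suggest. This is harmless: the bounds hold for any such coloring by the standard embedding argument (the color-$1$ neighborhood of $v$ induces a $(3,3,3;deg_1(v))$ coloring, so $deg_1(v)<R(3,3,3)=17$, while each color-$i$ neighborhood induces a two-coloring with no $K_4$ in color $1$ and no $K_3$ in the other color, bounding $deg_i(v)$ by the corresponding two-color Ramsey number), and the matching lower bounds then follow from the degree sum. With the bounds secured, the enumeration itself is entirely routine.
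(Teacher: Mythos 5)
Your proof is correct and matches the paper's intent: the paper gives no explicit proof, merely calling the corollary a direct consequence of Theorem~\ref{thm:433}, and your argument (degree sum $29$ plus the bounds of Theorem~\ref{thm:433}(1), followed by routine enumeration) is exactly that consequence spelled out. Your extra care about whether condition~(1) applies to an arbitrary $(4,3,3;30)$ coloring rather than a single witness, patched via the neighborhood-embedding bounds $deg_1(v)<R(3,3,3)$ and $deg_i(v)<R(4,3)$, is a legitimate and worthwhile clarification that the paper glosses over.
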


Consider a vertex $v$ in a $(4,3,3;n)$ coloring and focus on the three
subgraphs induced by the neighbors of $v$ in each of the three
colors. The following states that these must be corresponding Ramsey
colorings.

\begin{corollary}\label{cor:embed}
  Let $G$ is a $(4,3,3;n)$ coloring and $v$ be any vertex with degrees
  $(d_1,d_2,d_3)$ in the corresponding colors. Then, $d_1+d_2+d_3=n-1$
  and $G^1_v$, $G^2_v$, and $G^3_v$ are respectively $(3,3,3;d_1)$,
  $(4,2,3;d_2)$, and $(4,3,2;d_3)$ colorings.
\end{corollary}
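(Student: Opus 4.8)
The plan is to prove the two assertions separately: first the degree-sum identity $d_1+d_2+d_3=n-1$, and then the three embedding claims. The degree sum is immediate: in the complete graph $K_n$, the vertex $v$ is joined to each of the other $n-1$ vertices by exactly one edge, and that edge carries exactly one of the three colors. Hence the neighbors of $v$ partition into $N_1(v)$, $N_2(v)$, $N_3(v)$ according to the color of their connecting edge, and since $d_i = |N_i(v)| = deg_i(v)$ these sets have sizes summing to $n-1$.

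For the embedding claims I would treat the three colors in turn, each time arguing by contradiction that the induced subgraph on the appropriately colored neighbors cannot contain the forbidden monochromatic clique. Consider first $G^1_v$, the coloring induced on $N_1(v)\times N_1(v)$; it is a $3$-colored graph on $d_1$ vertices, so I must show it lies in $\RR(3,3,3;d_1)$, i.e.\ contains no monochromatic $K_3$ in any of the three colors. Suppose toward contradiction that $G^1_v$ contained a $K_3$ in color $1$ on vertices $\{a,b,c\}\subseteq N_1(v)$. Then all three edges $(a,b),(a,c),(b,c)$ have color $1$; but by definition every vertex in $N_1(v)$ is joined to $v$ by a color-$1$ edge, so $(v,a),(v,b),(v,c)$ are also color $1$. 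Together these six edges form a monochromatic $K_4$ in color $1$ on $\{v,a,b,c\}$, contradicting that $G$ is a $(4,3,3;n)$ coloring (which forbids a color-$1$ $K_4$). If instead $G^1_v$ contained a monochromatic $K_3$ in color $2$ or color $3$, that $K_3$ is already a forbidden configuration in $G$ itself, again a contradiction. Hence $G^1_v\in\RR(3,3,3;d_1)$.

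The remaining two cases are entirely analogous, only the color that combines with $v$ changes. For $G^2_v$ on $N_2(v)$: a color-$2$ $K_3$ among the neighbors would, together with the three color-$2$ edges to $v$, produce a color-$2$ $K_4$, which is not directly forbidden in a $(4,3,3;n)$ coloring; the correct forbidden object here is instead a color-$2$ $K_3$ in $G$, so what we must rule out inside $G^2_v$ are a color-$1$ $K_4$, a color-$2$ $K_2$ (i.e.\ any color-$2$ edge, since two color-$2$ neighbors joined by a color-$2$ edge would close a color-$2$ triangle through $v$), and a color-$3$ $K_3$ --- precisely the constraints defining $\RR(4,2,3;d_2)$. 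By the same edge-lifting argument, any such clique in $G^2_v$ lifts through the color-$2$ edges to $v$ to a forbidden clique in $G$. The case of $G^3_v$ is the mirror image, yielding $\RR(4,3,2;d_3)$.

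I do not anticipate a genuine obstacle, since the whole argument rests on the single observation that every neighbor in $N_c(v)$ is connected to $v$ by a color-$c$ edge, so a monochromatic clique of size $r$ in color $c$ inside the neighborhood extends to one of size $r+1$ through $v$. The only point demanding care is bookkeeping: getting the forbidden clique sizes in each neighborhood right so that the ``$+1$ through $v$'' lands exactly on the clique that $\varphi_{(4,3,3;n)}$ rules out, which is what forces the asymmetric parameters $(3,3,3)$, $(4,2,3)$, $(4,3,2)$ rather than a uniform triple.
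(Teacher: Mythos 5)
Your proof is correct, and it is the standard neighborhood-lifting argument that the paper itself relies on: the paper states Corollary~\ref{cor:embed} without proof, offering only the one-sentence motivation preceding it, so your write-up simply makes explicit the intended reasoning (partition of the $n-1$ neighbors by edge color, and the observation that a monochromatic $K_r$ in color $c$ inside $N_c(v)$ extends through $v$ to a monochromatic $K_{r+1}$, which fixes the parameters $(3,3,3)$, $(4,2,3)$, $(4,3,2)$). The only cosmetic slip is the sentence claiming ``any such clique in $G^2_v$ lifts through the color-$2$ edges to $v$''---the color-$1$ $K_4$ and color-$3$ $K_3$ cases are forbidden in $G$ directly rather than by lifting, as you correctly note elsewhere.
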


Note that by definition a $(4,2,3;n)$ coloring is a $(4,3;n)$ coloring
in colors 1 and 3 and likewise a $(4,3,2;n)$ coloring is a $(4,3;n)$
coloring in colors 1 and 2.
For $n\in\{14,15,16\}$, the set of all
$(3,3,3;n)$ colorings modulo (weak) isomorphism are known and consist
respectively of 2, 2 and 15 colorings. Similary, for $n\in\{5,6,7,8\}$
the set of all $(4,3;n)$ colorings modulo (weak) isomorphism are known
and consist respectively of 9, 15, 9, and 3 Ramsey colorings.

\section{Searching for Ramsey Colorings with Embeddings}

In this section we apply a general approach where, when seeking a
$(r_1,\ldots,r_k;n)$ Ramsey coloring one selects a ``preferred''
vertex, call it $v_1$, and based on its degrees in each of the $k$
colors, embeds $k$ subgraphs which are corresponding smaller
colorings. Using this approach, we apply
Corollaries~\ref{cor:degrees} and~\ref{cor:embed} to establish that a
$(4,3,3;30)$ coloring, if one exists, must be $(13,8,8)$
regular. Specifically, all vertices have 13 neighbors by way of edges in the
first color and 8 neighbors each, by way of edges in the second and
third colors.

\begin{theorem}\label{thm:regular}
  Any $(4,3,3;30)$ coloring, if one exists, is $(13,8,8)$ regular.
\end{theorem}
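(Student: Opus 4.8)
The plan is to use Corollary~\ref{cor:degrees} to reduce the problem to ruling out five of the six possible degree triples, so that only $(13,8,8)$ survives and the coloring is forced to be $(13,8,8)$ regular. By Corollary~\ref{cor:degrees}, every vertex $v$ of a hypothetical $(4,3,3;30)$ coloring $G$ has degrees matching one of the six triples $(13,8,8)$, $(14,8,7)$, $(15,7,7)$, $(15,8,6)$, $(16,7,6)$, $(16,8,5)$. Thus it suffices to show that no vertex can realize any of the latter five triples; once these are excluded, every vertex has degrees $(13,8,8)$, which is exactly the assertion of $(13,8,8)$-regularity.

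The heart of the argument is the embedding technique supplied by Corollary~\ref{cor:embed}. For a vertex $v$ with degrees $(d_1,d_2,d_3)$ the three neighbor-induced subgraphs $G^1_v$, $G^2_v$, $G^3_v$ must be, respectively, a $(3,3,3;d_1)$, a $(4,2,3;d_2)$, and a $(4,3,2;d_3)$ coloring. For each of the five triples to be eliminated, the relevant values of $d_1\in\{14,15,16\}$ and $d_2,d_3\in\{5,6,7\}$ fall in the ranges for which the full sets of smaller Ramsey colorings are already known: the $(3,3,3;n)$ colorings for $n\in\{14,15,16\}$ (2, 2, and 15 of them) and the $(4,3;n)$ colorings for $n\in\{5,6,7,8\}$ (9, 15, 9, and 3 of them), recalling that $(4,2,3;n)$ and $(4,3,2;n)$ colorings are just $(4,3;n)$ colorings in the appropriate pair of colors. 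First I would, for each candidate triple, enumerate these finitely many admissible embedded subgraphs; then I would attempt to extend such an embedding to a full $(4,3,3;30)$ coloring by fixing the block of the adjacency matrix corresponding to $v$ and its color classes and adding the global Ramsey constraints $\varphi_{(4,3,3;30)}$ of Figure~\ref{fig:gcp}, together with the symmetry-breaking predicate $\SB^*_\ell$ of Definition~\ref{def:SBlexStar} to contain the search space.

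The decisive step is the infeasibility check: I would hand each such constrained instance to the SAT solver and verify that it is unsatisfiable. Concretely, for every choice of a degree triple among the five to be excluded and every weak-isomorphism representative of the associated embedded colorings, the conjunction of the embedding constraints with $\varphi_{(4,3,3;30)}$ must have no solution; unsatisfiability of all these instances certifies that no vertex of a $(4,3,3;30)$ coloring can have degrees other than $(13,8,8)$.

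The main obstacle I anticipate is not the logical structure, which is a clean case analysis, but the scale and trustworthiness of the SAT computation: the number of embedded representatives to branch over, the size of each encoded instance on $30$ vertices, and the solving time. Mitigating this is exactly why the known enumerations of the smaller colorings matter (they keep the branching finite and modest) and why the symmetry break is essential (it shrinks each instance to a tractable size). Establishing the $(14,8,7)$, $(15,7,7)$, $(15,8,6)$, $(16,7,6)$, $(16,8,5)$ cases as unsatisfiable is therefore the crux, and I would additionally cross-validate with several independent SAT backends to guard against solver bugs, as \emph{unsat} results are the load-bearing claims of the proof.
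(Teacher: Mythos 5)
Your proposal follows the paper's proof essentially verbatim: Corollary~\ref{cor:degrees} restricts every vertex to one of six degree triples, Corollary~\ref{cor:embed} forces the three neighbor-induced subgraphs to be known smaller Ramsey colorings, and a finite family of SAT instances (one per choice of triple and per triple of embedded representatives, e.g.\ $115\times3\times9=3105$ for $(14,8,7)$) is shown unsatisfiable, which is exactly what Table~\ref{table:regular} records; your remark about cross-validating with several SAT backends also matches what the authors do. One caveat: the paper does \emph{not} conjoin the symmetry-breaking predicate $\SB^*_\ell$ to these embedding instances, and adding it as you propose is not obviously sound for an unsatisfiability argument --- once the first row and the three diagonal blocks are pinned to fixed canonical representatives, it is no longer guaranteed that every hypothetical solution class retains a representative that simultaneously extends one of the enumerated embeddings \emph{and} satisfies the global lex ordering, so the extra constraint could in principle turn a satisfiable family into an all-unsat one. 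Dropping $\SB^*_\ell$ here (as the paper does) removes that risk; the enumeration of embedded representatives modulo weak isomorphism already supplies the needed symmetry reduction.
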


\begin{proof}
  By computation as described in the rest of this section.
\end{proof}

We seek a $(4,3,3;30)$ coloring of $K_{30}$, represented as a
$30\times 30$ adjacency matrix $A$.  We focus on the degrees,
$(d_1,d_2,d_3)$ in each of the three colors, of the vertex $v_1$,
corresponding to the first row in $A$, as prescribed by
Corollary~\ref{cor:degrees}.  For each such degree triplet, except for
the case $(13,8,8)$, we take each of the known corresponding colorings
for the subgraphs $G^1_{v_1}$, $G^2_{v_1}$, and $G^3_{v_1}$ and embed
them in $A$. We then apply a SAT solver, to complete the remaining
cells in $A$ to satisfy Constraint~\ref{constraint:r334} of
Figure~\ref{fig:gcp}. If the SAT solver fails, then no such completion
exists. 

To illustrate the approach, consider the case where $v_1$ has degrees
$(14,8,7)$ in the three colors. Figure~\ref{embed_14_8_7} details one
of the embeddings corresponding to this case.  The first row of $A$
specifes the colors of the edges of the 29 neighbors of $v_1$. The
symbol ``$\_$'' indicates an integer variable that takes a value
between 1 and 3.
The neighbors of $v_1$ in color~1 form a submatrix of $A$ embedded in
rows (and columns) 2--15 of the matrix in the Figure.  By
Corollary~\ref{cor:embed} these are a $(3,3,3;14)$ Ramsey coloring and
there are 115 possible such colorings modulo weak isonmorphism. The
Figure details one of them.
Similarly, there are 3 possible subgraphs for the neighbors of $v_1$
in color~2,  (the $3$ $(4,2,3;8)$ colorings). In
Figure~\ref{embed_14_8_7}, rows (and columns) 16--23 detail one such
coloring.
Finally, there are 9 possible subgraphs for the neighbors of $v_1$ in
color~3, (the $9$ $(4,3,2;7)$ colorings). In
Figure~\ref{embed_14_8_7}, rows (and columns) 24--30 detail one such
coloring.

\newcommand{\undA}{\_&\_&\_&\_&\_&\_&\_&\_&\_&\_&\_&\_&\_&\_&\_}
\newcommand{\undB}{\_&\_&\_&\_&\_&\_&\_&\_&\_&\_&\_&\_&\_&\_}
\newcommand{\undC}{\_&\_&\_&\_&\_&\_&\_}
\newcommand{\undD}{\_&\_&\_&\_&\_&\_&\_&\_&\_&\_&\_&\_&\_&\_&\_&\_&\_&\_&\_&\_&\_&\_}
\begin{figure}
  \begin{center}
\resizebox{.49\linewidth}{!}{$\left[                                  
\begin{smallmatrix}
0&1&1&1&1&1&1&1&1&1&1&1&1&1&1&2&2&2&2&2&2&2&2&3&3&3&3&3&3&3&\\
1& 0&1&1&1&1&2&2&2&2&3&3&3&3&3&\undA \\
1& 1&0&2&2&3&1&1&3&3&1&2&2&3&3&\undA \\
1& 1&2&0&3&2&1&2&1&3&1&1&3&2&3&\undA \\
1& 1&2&3&0&2&2&1&3&1&1&3&1&3&2&\undA \\
1& 1&3&2&2&0&1&1&2&2&2&3&3&1&1&\undA \\
1& 2&1&1&2&1&0&3&3&1&2&3&1&2&3&\undA \\
1& 2&1&2&1&1&3&0&1&3&2&1&3&3&2&\undA \\
1& 2&3&1&3&2&3&1&0&1&3&2&3&2&1&\undA \\
1& 2&3&3&1&2&1&3&1&0&3&3&2&1&2&\undA \\
1& 3&1&1&1&2&2&2&3&3&0&2&2&1&1&\undA \\
1& 3&2&1&3&3&3&1&2&3&2&0&1&1&2&\undA \\
1& 3&2&3&1&3&1&3&3&2&2&1&0&2&1&\undA \\
1& 3&3&2&3&1&2&3&2&1&1&1&2&0&2&\undA \\
1& 3&3&3&2&1&3&2&1&2&1&2&1&2&0&\undA \\
2& \undB&0&2&2&2&1&1&1&1&\undC \\
2& \undB&2&0&1&1&2&2&1&1&\undC \\
2& \undB&2&1&0&1&2&1&2&1&\undC \\
2& \undB&2&1&1&0&1&1&2&2&\undC \\
2& \undB&1&2&2&1&0&1&1&2&\undC \\
2& \undB&1&2&1&1&1&0&2&2&\undC \\
2& \undB&1&1&2&2&1&2&0&1&\undC \\
2& \undB&1&1&1&2&2&2&1&0&\undC \\
3&    \undD& 0&2&2&1&1&1&1 \\
3&    \undD& 2&0&1&2&1&1&1 \\
3&    \undD& 2&1&0&1&2&1&1 \\
3&    \undD& 1&2&1&0&1&2&1 \\
3&    \undD& 1&1&2&1&0&1&2 \\
3&    \undD& 1&1&1&2&1&0&2 \\
3&    \undD& 1&1&1&1&2&2&0 \\                   
\end{smallmatrix}\right]$}
\caption{One of 3105 embeddings in the search for a $(4,3,3;30)$
  coloring when the first vertex has degrees $(14,8,7)$.   }
        \label{embed_14_8_7}
  \end{center}
\end{figure}

To summarize, Figure~\ref{embed_14_8_7} is a partial instantiated
adjacency matrix in which the first row determines the degrees of
$v_1$ in each of the three colors, and where 3 corresponding subgraphs
are embedded.  The uninstantiated values in the matrix must be
completed to obtain a solution that satisfies
Constraint~\ref{constraint:r334} of Figure~\ref{fig:gcp}. This can be
determined using a SAT solver.
%
% i used cases 1,1,1
% cd /comm/itzhakoa/svnBee/bApplications/Ramsey/Results/Results_7_8_14
% + testADM_30_7_8_14_1
%   0.62799,   33959,  5318,  51.37527,unsat
%
For the specific example in Figure~\ref{embed_14_8_7}, the CNF
generated using our tool set consists of 33{,}959 clauses, involves
5{,}318 Boolean variables, and is shown to be unsatisfiable in 52 seconds of
computation time.
For the case where $v_1$ has degrees $(14,8,7)$ in the three colors
this is one of $115\times 3\times 9 = 3105$ instances that need to be
checked.

Table~\ref{table:regular} summarizes the experiment which proves
Theorem~\ref{thm:regular}. For each of the possible degrees of
vertex~1 in a $(4,3,3;30)$ coloring as prescribed
by Corollary~\ref{cor:degrees}, except $(13,8,8)$, and for each possible choice of
colorings for the derived subgraphs $G^1_{v_1}$, $G^2_{v_1}$, and
$G^3_{v_1}$, we apply a SAT solver to show that
Constraint~\ref{constraint:r334} of Figure~\ref{fig:gcp} cannot be
satisfied. The table details for each degree triple, the number of
instances, their average size (number of clauses and Boolean
variables), and the average and total times to show that the
constraint is not satisfieable.

\begin{table}
\begin{center}
\begin{tabular}{|c|c|c|c|r|r|}
\hline
$v_1$ degrees    & \# instances & \# clauses (avg.) & \# vars (avg.) &
unsat (avg).) & unsat (total) \\ \hline
(16,8,5) & 54 (2*3*9)   & 32432 & 5279 &  51 sec. &  0.77 hrs.\\ 
\hline
(16,7,6) & 270 (2*9*15) & 32460 & 5233 & 420 sec. & 31.50 hrs.\\ 
\hline
(15,8,6) & 90 (2*3*15)  & 33607 & 5450 &  93 sec. &  2.32 hrs.\\ 
\hline
(15,7,7) & 162 (2*9*9)  & 33340 & 5326 &1554 sec. & 69.94 hrs.\\ 
\hline
(14,8,7) &3105 (115*3*9)& 34069 & 5324 & 294 sec. &253.40 hrs.\\ 
\hline
% (8,8,13)  &   78892   &  3    & 3 & 710028       &   -            \\ \hline

\end{tabular}
\end{center}
\caption{Proving that any $(4,3,3;30)$ Ramsey coloring is $(13,8,8)$
  regular (summary).}
\label{table:regular}
\end{table}

To gain confidence in our implementation, we illustrate its
application to find a $(4,3,3;29)$ coloring which is known to exist.
This experiment involves some reverse engineering.
In 1966 Kalbfleisch~\cite{kalb66} reported the existence of a
circulant $(3,4,4;29)$ coloring. Encoding
Constraint~\ref{constraint:r334} with $n=29$, together with a
constraint that states that the adjacency matrix $A$ is circulant,
results in a CNF with 146{,}506 clauses and 8{,}394 variables. Using a
SAT solver, we obtain a corresponding $(4,3,3;29)$ coloring in less
than two seconds of computation time. The solution is $(12,8,8)$ regular
and permuting its first row to be of the form
$01111111111112222222233333333$ we extract from it three corresponding
subgraphs: $G^1_{v_1}$, $G^2_{v_1}$ and $G^3_{v_1}$ which are
respectively $(3,3,3;12)$, $(4,2,3;8)$ and $(4,3,2;8)$ Ramsey colorings. An
embedding of these three in a $29\times 29$ adjacency matrix is
depicted as Figure~\ref{embed_12_8_8}.

\begin{figure}  
\newcommand{\uuA}{\_&\_&\_&\_&\_&\_&\_&\_&\_&\_&\_&\_&\_&\_&\_&\_}
\newcommand{\uuB}{\_&\_&\_&\_&\_&\_&\_&\_&\_&\_&\_&\_}
\newcommand{\uuC}{\_&\_&\_&\_&\_&\_&\_&\_}
\newcommand{\uuD}{\_&\_&\_&\_&\_&\_&\_&\_&\_&\_&\_&\_&\_&\_&\_&\_&\_&\_&\_&\_}
\newcommand{\bz}{{\bf 0}} 
\newcommand{\ba}{{\bf 1}} 
\newcommand{\bb}{{\bf 2}} 
\newcommand{\bc}{{\bf 3}}  
\resizebox{.49\linewidth}{!}{$\left[                                  
\begin{smallmatrix}
\bz& \ba&\ba&\ba&\ba&\ba&\ba&\ba&\ba&\ba&\ba&\ba&\ba&\bb&\bb&\bb&\bb&\bb&\bb&\bb&\bb&\bc&\bc&\bc&\bc&\bc&\bc&\bc&\bc& \\
\ba& \bz&\ba&\bc&\ba&\bc&\bb&\bb&\bb&\bb&\ba&\bc&\ba &\uuA \\
\ba& \ba&\bz&\ba&\bc&\bc&\ba&\bb&\ba&\bc&\bb&\ba&\bc &\uuA \\
\ba& \bc&\ba&\bz&\ba&\bb&\bc&\bb&\bb&\bc&\bc&\bb&\ba &\uuA \\
\ba& \ba&\bc&\ba&\bz&\ba&\bc&\ba&\bb&\ba&\bc&\bc&\bb &\uuA \\
\ba& \bc&\bc&\bb&\ba&\bz&\ba&\bc&\bc&\bb&\bb&\ba&\bb &\uuA \\
\ba& \bb&\ba&\bc&\bc&\ba&\bz&\ba&\bc&\ba&\bb&\bb&\bb &\uuA \\
\ba& \bb&\bb&\bb&\ba&\bc&\ba&\bz&\ba&\bc&\bc&\ba&\bb &\uuA \\
\ba& \bb&\ba&\bb&\bb&\bc&\bc&\ba&\bz&\ba&\bb&\bc&\bc &\uuA \\
\ba& \bb&\bc&\bc&\ba&\bb&\ba&\bc&\ba&\bz&\ba&\bc&\ba &\uuA \\
\ba& \ba&\bb&\bc&\bc&\bb&\bb&\bc&\bb&\ba&\bz&\ba&\bc &\uuA \\
\ba& \bc&\ba&\bb&\bc&\ba&\bb&\ba&\bc&\bc&\ba&\bz&\ba &\uuA \\
\ba& \ba&\bc&\ba&\bb&\bb&\bb&\bb&\bc&\ba&\bc&\ba&\bz &\uuA \\
\bb& \uuB&     \bz&\ba&\bc&\bc&\bc&\ba&\ba&\ba& \uuC \\
\bb& \uuB&     \ba&\bz&\ba&\bc&\ba&\ba&\bc&\ba& \uuC \\
\bb& \uuB&     \bc&\ba&\bz&\ba&\ba&\bc&\ba&\ba& \uuC \\
\bb& \uuB&     \bc&\bc&\ba&\bz&\ba&\ba&\ba&\bc& \uuC \\
\bb& \uuB&     \bc&\ba&\ba&\ba&\bz&\ba&\bc&\bc& \uuC \\
\bb& \uuB&     \ba&\ba&\bc&\ba&\ba&\bz&\ba&\bc& \uuC \\
\bb& \uuB&     \ba&\bc&\ba&\ba&\bc&\ba&\bz&\ba& \uuC \\
\bb& \uuB&     \ba&\ba&\ba&\bc&\bc&\bc&\ba&\bz& \uuC \\
\bc&    \uuD& \bz&\ba&\ba&\bb&\ba&\bb&\ba&\bb \\
\bc&    \uuD& \ba&\bz&\ba&\ba&\bb&\bb&\bb&\ba \\
\bc&    \uuD& \ba&\ba&\bz&\ba&\ba&\bb&\bb&\bb \\
\bc&    \uuD& \bb&\ba&\ba&\bz&\bb&\ba&\bb&\ba \\
\bc&    \uuD& \ba&\bb&\ba&\bb&\bz&\ba&\ba&\bb \\
\bc&    \uuD& \bb&\bb&\bb&\ba&\ba&\bz&\ba&\ba \\
\bc&    \uuD& \ba&\bb&\bb&\bb&\ba&\ba&\bz&\ba \\
\bc&    \uuD& \bb&\ba&\bb&\ba&\bb&\ba&\ba&\bz 
\end{smallmatrix}\right]$}
\quad
\resizebox{.49\linewidth}{!}{$\left[  
\begin{smallmatrix}
\bz&\ba&\ba&\ba&\ba&\ba&\ba&\ba&\ba&\ba&\ba&\ba&\ba&\bb&\bb&\bb&\bb&\bb&\bb&\bb&\bb&\bc&\bc&\bc&\bc&\bc&\bc&\bc&\bc\\
\ba&\bz&\ba&\bc&\ba&\bc&\bb&\bb&\bb&\bb&\ba&\bc&\ba&2&2&1&3&1&1&1&2&3&3&1&2&3&1&1&3\\
\ba&\ba&\bz&\ba&\bc&\bc&\ba&\bb&\ba&\bc&\bb&\ba&\bc&1&2&3&3&1&2&2&2&1&3&1&1&3&1&2&2\\
\ba&\bc&\ba&\bz&\ba&\bb&\bc&\bb&\bb&\bc&\bc&\bb&\ba&1&1&3&1&1&2&3&2&1&1&3&3&2&1&2&1\\
\ba&\ba&\bc&\ba&\bz&\ba&\bc&\ba&\bb&\ba&\bc&\bc&\bb&3&2&2&2&1&2&3&1&2&1&3&1&1&1&2&3\\
\ba&\bc&\bc&\bb&\ba&\bz&\ba&\bc&\bc&\bb&\bb&\ba&\bb&1&2&1&2&1&3&3&1&2&1&2&3&1&3&1&1\\
\ba&\bb&\ba&\bc&\bc&\ba&\bz&\ba&\bc&\ba&\bb&\bb&\bb&1&1&1&2&2&2&1&3&3&3&2&1&1&3&3&1\\
\ba&\bb&\bb&\bb&\ba&\bc&\ba&\bz&\ba&\bc&\bc&\ba&\bb&3&1&2&2&2&1&1&1&1&3&3&1&1&2&3&3\\
\ba&\bb&\ba&\bb&\bb&\bc&\bc&\ba&\bz&\ba&\bb&\bc&\bc&1&3&3&1&2&1&2&1&1&1&3&1&3&2&1&2\\
\ba&\bb&\bc&\bc&\ba&\bb&\ba&\bc&\ba&\bz&\ba&\bc&\ba&1&3&2&1&2&2&2&3&3&2&1&1&1&3&1&2\\
\ba&\ba&\bb&\bc&\bc&\bb&\bb&\bc&\bb&\ba&\bz&\ba&\bc&2&3&2&1&1&3&1&1&1&2&1&2&3&3&1&1\\
\ba&\bc&\ba&\bb&\bc&\ba&\bb&\ba&\bc&\bc&\ba&\bz&\ba&2&2&2&1&3&3&2&1&2&2&1&3&1&1&3&1\\
\ba&\ba&\bc&\ba&\bb&\bb&\bb&\bb&\bc&\ba&\bc&\ba&\bz&2&1&1&1&3&1&2&2&3&1&1&3&2&1&3&3\\
\bb&2&1&1&3&1&1&3&1&1&2&2&2&\bz&\ba&\bc&\bc&\bc&\ba&\ba&\ba&3&1&1&3&3&2&2&2\\
\bb&2&2&1&2&2&1&1&3&3&3&2&1&\ba&\bz&\ba&\bc&\ba&\ba&\bc&\ba&1&3&1&3&2&2&1&3\\
\bb&1&3&3&2&1&1&2&3&2&2&2&1&\bc&\ba&\bz&\ba&\ba&\bc&\ba&\ba&1&1&2&1&2&3&3&3\\
\bb&3&3&1&2&2&2&2&1&1&1&1&1&\bc&\bc&\ba&\bz&\ba&\ba&\ba&\bc&1&2&3&2&1&3&3&2\\
\bb&1&1&1&1&1&2&2&2&2&1&3&3&\bc&\ba&\ba&\ba&\bz&\ba&\bc&\bc&2&3&3&1&2&3&2&1\\
\bb&1&2&2&2&3&2&1&1&2&3&3&1&\ba&\ba&\bc&\ba&\ba&\bz&\ba&\bc&3&3&3&2&1&2&1&1\\
\bb&1&2&3&3&3&1&1&2&2&1&2&2&\ba&\bc&\ba&\ba&\bc&\ba&\bz&\ba&3&1&2&2&3&1&3&1\\
\bb&2&2&2&1&1&3&1&1&3&1&1&2&\ba&\ba&\ba&\bc&\bc&\bc&\ba&\bz&2&2&2&3&3&1&1&3\\
\bc&3&1&1&2&2&3&1&1&3&1&2&3&3&1&1&1&2&3&3&2&\bz&\ba&\ba&\bb&\ba&\bb&\ba&\bb\\
\bc&3&3&1&1&1&3&3&1&2&2&2&1&1&3&1&2&3&3&1&2&\ba&\bz&\ba&\ba&\bb&\bb&\bb&\ba\\
\bc&1&1&3&3&2&2&3&3&1&1&1&1&1&1&2&3&3&3&2&2&\ba&\ba&\bz&\ba&\ba&\bb&\bb&\bb\\
\bc&2&1&3&1&3&1&1&1&1&2&3&3&3&3&1&2&1&2&2&3&\bb&\ba&\ba&\bz&\bb&\ba&\bb&\ba\\
\bc&3&3&2&1&1&1&1&3&1&3&1&2&3&2&2&1&2&1&3&3&\ba&\bb&\ba&\bb&\bz&\ba&\ba&\bb\\
\bc&1&1&1&1&3&3&2&2&3&3&1&1&2&2&3&3&3&2&1&1&\bb&\bb&\bb&\ba&\ba&\bz&\ba&\ba\\
\bc&1&2&2&2&1&3&3&1&1&1&3&3&2&1&3&3&2&1&3&1&\ba&\bb&\bb&\bb&\ba&\ba&\bz&\ba\\
\bc&3&2&1&3&1&1&3&2&2&1&1&3&2&3&3&2&1&1&1&3&\bb&\ba&\bb&\ba&\bb&\ba&\ba&\bz
\end{smallmatrix}\right]$}

\caption{Embedding (left) and solution (right) for $(3,3,4;29)$ Ramsey coloring}
        \label{embed_12_8_8}

\end{figure}

Applying a SAT solver to complete this embedding to a $(4,3,3;29)$
coloring that satsifies Constraint~~\ref{constraint:r334} involves a
CNF with 30{,}944 clauses and 4{,}736 variables and requires under two
minutes of computation time.

 % \begin{itemize}
 %  \item TODO: show the solution of Figure~\ref{embed_12_8_8}
%  \begin{enumerate}
%   \item add fig:circ the circular solution
%   \item add a table for the nine cases
%   \item add fig:sol433|
%   \item add table:sol433Times
%  \end{enumerate}
%\end{itemize}

\paragraph{Proving that $R(4,3,3)=30$.}

% We would now like to apply this method in order to show that there are no 
% $(4,3,3;30)$ colorings. Recall that we are only required to consider the case
% where $v_1$ has degree triplet $(13,8,8)$, since we have shown in previous 
% sections that all other possible degree sequences for $v_1$ are impossible. 

To apply the embedding approach described in this section to prove
that there is no $(4,3,3;30)$ Ramsey coloring which is $(13,8,8)$
regular would require considering all $(3,3,3;13)$ colorings modulo
weak isomorphism. Doing this would constitute a proof that
$R(4,3,3)=30$.  We defer this discussion until after
Section~\ref{sec:dm2} where we describe how we compute the set of all
78{,}892 $(3,3,3;13)$ Ramsey colorings modulo weak isomorphism.

% In order to solve $(13,8,8)$ we need to embed all the options for the following 
% subgraphs: $(4,3,2;8)$, $(4,2,3;8)$ and $(3,3,3;13)$. However, the exact number 
% of $(3,3,3;13)$ colorings has never been published, neither has the set of all 
% $(3,3,3;13)$ colorings. Which leads us to try and compute it.

% there are two items to add to this section:

% \subsection*{1} illustrate the embedding approach to find a known
% $(4,3,3;29)$ coloring. this is also confidence beuilding.

%\begin{itemize}
%\item we know from X that there exists a circular (4,3,3;29) coloring.
%\item we can find it using a SAT solver by taking
%  Constraint~\ref{constraint:r334} and adding a circular formula
%  (detail it). 
%\item this results in a cnf of size X that takes Y seconds to
%  solve. detail the solution in a figure and comment that
%  we see that it is 12,8,8 regular.
%\item permute this solution to the form of the embedding where row
%  $v_1$ is $1111111111112222222233333333$. detail this in a figure.
%\item from the previous figure we can find the  3 subgraphs and get a 
%  partial matrix with three embeddings. encoding
%  Constraint~\ref{constraint:r334} with this partial solution gives a
%  cnf of size X that takes Y seconds to solve. show the solution.  
%\end{itemize}

%\subsection*{2} explain that in order to prove that there is no
%(4,3,3;30) coloring we need to complete the missing row for (13,8,8)
%in Table~\ref{table:regular} and for this we need to compute the set
%of all (3,3,3;13) colorings. this leads us into the next section.

%\input{dm}

\section{Degree Matrices for Graph Colorings}

We introduce an abstraction on graph colorings defined in terms of
\emph{degree matrices} and an equivalence relation on degree
matrices. Our motivation is to solve graph coloring problems by first
focusing on an over approximation of their degree matrices. The
equivalence relation on degree matrices enables us to break symmetries
during search when solving graph coloring problems.
Intuitively, degree matrices are to graph edge-colorings as
degree sequences are to graphs.

\begin{definition}[\textbf{abstraction, degree matrix}]
\label{def:dm}
  Let $A$ be a graph coloring on $n$ vertices with $k$ colors. The
  \emph{degree matrix} of $A$, denoted $\alpha(A)$ is an $n\times k$
  matrix, $M$ such that $M_{i,j} = deg_j(i)$ is the degree of vertex
  $i$ in color $j$. For a set $\AA$ of graph colorings we denote
  $\alpha(\AA) = \sset{\alpha(A)}{A\in\AA}$.
\end{definition}

A degree matrix, $M$, is said to {\it represent} the set of graphs
weakly-isomorphic to a graph with degrees as in $M$. We say that two
degree matrices are equivalent if they represent the same sets of
graph colorings.

\begin{definition}[\textbf{concretization and equivalence}]
\label{def:conc}
  Let $M$ and $N$ be $n\times k$ degree matrices. Then, $\gamma(M) =
  \sset{A}{A\approx A',~\alpha(A')=M}$ is the set of graph colorings
  represented by $M$ and we say that $M\equiv N \Leftrightarrow
  \gamma(M)=\gamma(N)$. For a set $\MM$ of degree matrices we denote
  $\gamma(\MM) = \cup\sset{\gamma(M)}{M\in\MM}$.
\end{definition}

 Due to properties of weak-isomorphism (vertices as well as colors can
 be reordered) we can exchange both rows and columns of a degree matrix
 without changing the set of graphs it represents.
 In our construction we assume that the rows and columns of a degree
 matrix are sorted lexicographically. Observe also that the columns of
 a degree matrix each form a graphic sequence (when sorted).

\begin{definition}[\textbf{lex sorted degree matrix}]
  %For an $n\times k$ degree matrix $M$ we denote by $lex(M)$ the
  %matrix with rows and columns in the lexicographic order
  %(non-increasing) obtained by permuting rows and columns of $M$.	
  For an $n\times k$ degree matrix $M$ we denote by $lex(M)$ the smallest
  matrix with rows and columns in the lexicographic order
  (non-increasing) obtained by permuting rows and columns of $M$.
\end{definition}

The following  implies that for degree matrices we can assume
without loss of generality that rows and columns are lexicographically ordered.

\begin{theorem}\label{thm:dme}
  If $M$, $N$ are degree matrices then $M\equiv N$ if and only if
  there exists permutations $\pi \colon [n] \to [n]$ and $\sigma
  \colon [k] \to [k]$ such that, for $1\leq i \leq n$, $1\leq j\leq
  k$, $M_{i,j} = N_{\pi(i),\sigma(j)}$.
\end{theorem}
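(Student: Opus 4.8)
The plan is to prove both directions by translating the two \emph{matrix-level} operations on degree matrices --- permuting rows and permuting columns --- into the two \emph{coloring-level} operations that define weak isomorphism in Definition~\ref{def:weak_iso}, namely relabelling vertices (by $\pi$) and permuting colors (by $\sigma$). The bridge between the two levels is a single identity that I would establish first: for any colorings $X,Y$ and any weak isomorphism $X\approx_{\pi,\sigma}Y$, the color-$c$ degree of a vertex $u$ in $X$ equals the color-$\sigma^{-1}(c)$ degree of $\pi(u)$ in $Y$, i.e. $\alpha(X)_{u,c}=\alpha(Y)_{\pi(u),\sigma^{-1}(c)}$. This follows directly from Definition~\ref{def:weak_iso} by counting neighbors: a neighbor $v$ of $u$ satisfies $\kappa_X(u,v)=c$ exactly when $\kappa_Y(\pi(u),\pi(v))=\sigma^{-1}(c)$, and $v\mapsto\pi(v)$ is a bijection of neighborhoods. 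I would use this identity in both directions.

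For the $(\Leftarrow)$ direction, assume $M_{i,j}=N_{\pi(i),\sigma(j)}$ and recall the definitions of $\gamma$ and $\equiv$ from Definition~\ref{def:conc}. Given $A\in\gamma(M)$, there is a coloring $A'$ with $A\approx A'$ and $\alpha(A')=M$. I would construct an explicit $A''$ weakly isomorphic to $A'$ by relabelling its vertices along $\pi$ and recoloring along $\sigma$, and verify via the bridge identity that $\alpha(A'')=N$; then $A\approx A'\approx A''$ witnesses $A\in\gamma(N)$. The reverse inclusion follows symmetrically using $\pi^{-1},\sigma^{-1}$, so $\gamma(M)=\gamma(N)$, i.e. $M\equiv N$.

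For the $(\Rightarrow)$ direction, suppose $M\equiv N$. Since $M$ is a degree matrix, by Definition~\ref{def:dm} it is realized by some coloring $A_M$ with $\alpha(A_M)=M$; taking the identity weak isomorphism shows $A_M\in\gamma(M)=\gamma(N)$. Unfolding $\gamma(N)$ then yields a coloring $A'$ with $\alpha(A')=N$ together with a weak isomorphism $A_M\approx_{\pi,\sigma}A'$. Applying the bridge identity to this weak isomorphism gives $M_{i,j}=\alpha(A_M)_{i,j}=\alpha(A')_{\pi(i),\sigma^{-1}(j)}=N_{\pi(i),\sigma^{-1}(j)}$, so $\pi$ together with the color permutation $\sigma^{-1}$ are exactly the permutations required by the statement.

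The neighbor-counting that proves the bridge identity and the explicit construction of $A''$ in the $(\Leftarrow)$ direction are routine. The genuine obstacle is the $(\Rightarrow)$ direction: the hypothesis $\gamma(M)=\gamma(N)$ is only an equality of \emph{sets of colorings}, and to extract concrete permutations from it I must (i) invoke realizability of $M$ as a degree matrix to produce a member of $\gamma(M)$ whose degree matrix is \emph{exactly} $M$ (not merely weakly isomorphic to a coloring with degree matrix $M$), and (ii) handle the inverse carefully, since a color permutation $\sigma$ in the weak isomorphism acts on the degree-matrix \emph{columns} as $\sigma^{-1}$.
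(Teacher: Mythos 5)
Your proposal is correct and follows essentially the same route as the paper's proof, which compresses the argument into a single chain of equivalences unfolding Definitions~\ref{def:conc}, \ref{def:weak_iso} and~\ref{def:dm}; your ``bridge identity'' $\alpha(X)_{u,c}=\alpha(Y)_{\pi(u),\sigma^{-1}(c)}$ is exactly the step the paper's chain relies on implicitly when passing from weak isomorphism to the index-permutation condition. Your version is simply a more careful rendering of the same argument, making explicit two points the paper glosses over: the realizability of $M$ needed to extract a witness in the $(\Rightarrow)$ direction, and the fact that the color permutation of the weak isomorphism acts on degree-matrix columns as its inverse.
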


\begin{proof}
  Let $M$ and $N$ be degree matrices. Then,
  \[\begin{array}{l}
    M\equiv N   \xLeftrightarrow{\mbox{\tiny{~Defn.~\ref{def:conc}~}}}
    \gamma(M)=\gamma(N)  \xLeftrightarrow{\mbox{\tiny ~Defn.~\ref{def:conc}~}}
    \forall_{G\approx H}.G\in\gamma(M)\leftrightarrow H\in\gamma(N) 
                    \xLeftrightarrow{\mbox{\tiny ~Defn.~\ref{def:weak_iso}~}}\\
    \exists_{\pi,\sigma}.\alpha(G)_{i,j}=\alpha(H)_{\pi(i),\sigma(j)} 
                         \xLeftrightarrow{\mbox{\tiny ~Defn.~\ref{def:dm}~}}
    M_{i,j} = N_{\pi(i),\sigma(j)} 
  \end{array}\]
  
  % For a permutation $\sigma:[k]\rightarrow[k]$, denote by $\sigma(M)$
  % the matrix $M'$ for which $M'_{i,j} = M_{i,\sigma(j)}$.  Recall that $M \equiv N
  % \iff \gamma(M) = \gamma(N)$, which holds if and only if $\forall
  % G\approx H$ : $G\in\gamma(M) \iff H \in \gamma(N)$. Let $G\approx H$
  % and let $\pi \colon [n] \to [n]$ and $\sigma \colon [k] \to [k]$ be
  % the permutations such that Definition~\ref{def:weak_iso} holds. This
  % gives that
  % $\alpha(G)_{i,j}=\alpha(H)_{\pi(i),\sigma(j)}$. Specifically, if $G
  % \in \gamma(M)$ such that $\alpha(G)=M$ and $H \in \gamma(N)$ such
  % that $\alpha(H) = N$ then $M_{i,j}=N_{\pi(i),\sigma(j)}$
\end{proof}

\comment{%MC i didnt succeed to work through this definition; so i
         %made sure the old one is better versed
\begin{proof}
  Let $A$ be a matrix for which $\alpha(A)=M$. Then for some
  $A'\approx A$, $\alpha(A')=N$.  Let $\pi \colon [n] \to [n]$ and
  $\sigma \colon [k] \to [k]$ be the permutations such that
  Definition~\ref{def:weak_iso} holds.
  Now
\begin{eqnarray*}\small
\alpha(A)_{i,j}&=& \big|\sset{A_{i,m}}{A_{i,m}=j}\big|\\
      &=& \big| \sset{A'_{\pi(i),\pi(m)}}{A'_{\pi(i),\pi(m)}=\sigma(j))} \big|\\
      &=& \alpha(A')_{\pi(i),\sigma(j)}
% \alpha(A)_{i,j}&=&\#({\rm elements}\; A_{i,m}:A_{i,m}=j)\\
%       &=&\#({\rm elements}\; A'_{\pi(i),\pi(m)}:A'_{\pi(i),\pi(m)}=\sigma(j))\\
%       &=&\alpha(A')_{\pi(i),\sigma(j)}
\end{eqnarray*}
and the result follows.
\end{proof}
}

\begin{corollary}\label{cor:lexM}
 $M \equiv lex(M)$.
\end{corollary}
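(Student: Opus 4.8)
The plan is to derive this immediately from Theorem~\ref{thm:dme}, which characterizes $\equiv$ as exactly the relation of being a simultaneous row-and-column permutation. First I would unfold the definition of $lex(M)$: by construction, $lex(M)$ is obtained from $M$ by permuting its rows and columns. Hence there exist permutations $\pi \colon [n]\to[n]$ and $\sigma \colon [k]\to[k]$ such that $lex(M)_{i,j} = M_{\pi(i),\sigma(j)}$ for all $1\le i\le n$ and $1\le j\le k$.

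Next I would put this witness into the exact orientation required by Theorem~\ref{thm:dme}. Since permutations are invertible, applying $\pi^{-1}$ and $\sigma^{-1}$ to the indices gives $M_{i,j} = lex(M)_{\pi^{-1}(i),\sigma^{-1}(j)}$ for all $i,j$. Taking the permutations $\pi^{-1}$ and $\sigma^{-1}$ as the witnesses, this is precisely the existence statement appearing on the right-hand side of the biconditional in Theorem~\ref{thm:dme} with $N = lex(M)$.

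Finally, invoking the ``if'' direction of Theorem~\ref{thm:dme} for the pair $(M,\,lex(M))$ yields $M \equiv lex(M)$, which is the claim.

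I do not expect any substantive obstacle here, since the corollary is a direct specialization of the preceding theorem to the particular rearrangement $lex(M)$. The only point requiring a little care is the bookkeeping with inverse permutations: one must make sure the permutations that witness $lex(M)$ as a rearrangement of $M$ are transposed into the direction in which Theorem~\ref{thm:dme} phrases its condition. Everything else is a mechanical appeal to the definition of $lex(M)$ together with the characterization of $\equiv$ already established.
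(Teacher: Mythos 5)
Your proof is correct and follows the same route as the paper: the paper's own proof is a one-line appeal to Theorem~\ref{thm:dme}, noting that $M$ and $lex(M)$ are related by row and column permutations. Your extra care with inverting the permutations to match the orientation of the theorem's condition is sound bookkeeping that the paper simply leaves implicit.
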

\begin{proof}
  The result follows from Theorem~\ref{thm:dme} because $M$ and
  $lex(M)$ are related by permutations of rows and columns.
\end{proof}

\begin{example}
  The degree matrix on the right of Figure~\ref{333_16} describes
  both of the graphs in the figure.
\end{example}

\section{Solving Graph Coloring Problems with Degree Matrices}
\label{sec:adm}

Let $\varphi(A)$ be a graph coloring problem in $k$ colors on an
$n\times n$ adjacency matrix, $A$.
Assuming that $\AA=sol(\varphi(A))$ is too hard to compute, either
because the number of solutions is too large or because finding even
a single solution is too hard, 
our strategy is to first compute an over-approximation  $\MM$ of
degree matrices such that $\gamma(\MM)\supseteq\AA$ and to then use
$\MM$ to guide the computation of $\AA$. 
We denote the set of solutions of the graph coloring problem,
$\varphi(A)$, which have a given degree matrix, $M$, by
$sol_M(\varphi(A))$ and we have
\begin{equation}\label{eq:solM}
	sol_M(\varphi(A)) = sol(\varphi(A)\wedge\alpha(A){=}M)
\end{equation}
Note that $M\not\in\alpha(sol(\varphi(A)))\Rightarrow
sol_M(\varphi(A))=\emptyset$.  Hence, for $\MM \supseteq
\alpha(sol(\varphi(A)))$,
\begin{equation}\label{eq:approx}
sol(\varphi(A)) = 
    %\hspace{-6mm}
    %\bigcup_{M\in\alpha(sol(\varphi(A)))} 
    %\hspace{-6mm} sol_M(\varphi(A))~~ =
    % maybe this equality should be part of a lemma?
    % it seems pretty obvious, but its important
    \bigcup_{M\in\MM} sol_M(\varphi(A)) 
\end{equation}
Equation~(\ref{eq:approx}) implies that, using any over-approximation $\MM
\supseteq \alpha(sol(\varphi(A)))$, we can compute the solutions to a
graph coloring problem by computing the independent sets
$sol_M(\varphi(A))$ for each $M \in \MM$.
This facilitates the computation of $sol(\varphi(A))$ for three
reasons:
(1) The problem is now broken into a set of independent sub-problems
for each $M\in\MM$ which can be solved in parallel.
(2) The computation of each individual $sol_M(\varphi(A))$ is now
directed using $M$, and
(3) Symmetry breaking is facilitated. 

There are two sources of symmetries when solving $\varphi(A)$.  First,  we compute $\MM$ to consist of canonical degree matrices,
sorted lexicographically by rows and by columns. Second, we impose an
additional symmetry breaking constraint $\SB^*_\ell(A,M)$ as explained
below.

Consider a computation of all solutions of the constraint in the right
side of Equation~(\ref{eq:solM}). Consider a permutation $\pi$ of the
rows and columns of $A$, such that $\alpha(\pi(A))=\alpha(A)=M$. Then,
both $A$ and $A'$ are solutions and they are weakly isomorphic. The
following equation
\begin{equation}
  \label{eq:scenario1}
  sol_M(\varphi(A)) = sol(\varphi(A)\wedge (\alpha(A){=}M) \wedge\SB^*_\ell(A,M))
\end{equation}
refines Equation~(\ref{eq:solM}) introducing a symmetry breaking
constraint similar to the (partitioned lexicographic) symmetry break
predicate introduced by Codish \etal~in
\cite{DBLP:conf/ijcai/CodishMPS13} for Boolean adjacency matrices.
\begin{equation}\label{eq:sbdm}
  \SB^*_\ell(A,M) = 
       \bigwedge_{i<j} \left(\begin{array}{l}
         \big(M_i=M_j\Rightarrow A_i\preceq_{\{i,j\}} A_j\big)
      \end{array}\right)
  \end{equation}
where $s\preceq_{\{i,j\}}s'$ denotes the lexicographic order on
strings $s$ and $s'$ after simultaneously omitting the elements at
positions $i$ and $j$.

%%%%%

To justify that Equations~(\ref{eq:solM}) and~(\ref{eq:scenario1})
both compute $sol_M(\varphi(A))$, modulo weak isomorphism, we must show that
whenever $\SB^*_\ell(A,M)$ excludes a solution then there is another weakly
isomorphic solution that is not excluded. To this end, we introduce a
definition and then a theorem.

\begin{definition}[\textbf{degree matrix preserving permutation}]
  Let $A$ be an adjacency matrix with a lexicographically ordered
  degree matrix $\alpha(A) = M$.  We say that permutation
  $\pi$ is \emph{degree matrix preserving} for $M$ and $A$ if
  $\alpha(\pi(A)) = M$.
\end{definition}

\begin{theorem}[\textbf{correctness of $\SB^{*}_\ell(A,M)$}]
  Let $A$ be an adjacency matrix with a lexicographically ordered
  degree matrix $\alpha(A) = M$. Then, there exists a degree matrix preserving
  permutation $\pi$ such that $\alpha(\pi(A)) = M$ and
  $\SB^{*}_\ell(\pi(A),M)$ holds.
\end{theorem}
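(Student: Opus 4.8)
The plan is to exhibit $\pi$ as a canonical \emph{lex-leader} and to verify the symmetry break directly. First I would record the structure of degree matrix preserving permutations: since $M = \alpha(A)$ is lexicographically sorted, its equal rows form contiguous blocks $\beta_1,\dots,\beta_t$, and a permutation $\pi$ satisfies $\alpha(\pi(A)) = M$ exactly when it permutes vertices within each block. This is because $\alpha(\pi(A))_k = M_{\pi^{-1}(k)}$, so requiring $M_{\pi^{-1}(k)} = M_k$ for all $k$ forces $\pi$ to respect the blocks. The same observation shows that $\SB^*_\ell(\cdot,M)$ constrains a pair $i<j$ only when $i$ and $j$ lie in a common block, since otherwise $M_i \neq M_j$ and the implication in Equation~(\ref{eq:sbdm}) is vacuous. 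I would then let $\pi$ be a degree matrix preserving permutation that minimizes $\pi(A)$ in the row-major lexicographic order on matrices; such a $\pi$ exists by finiteness and satisfies $\alpha(\pi(A)) = M$ by construction. The claim is that this minimal representative already satisfies $\SB^*_\ell(\pi(A),M)$.

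To prove the claim I would argue by contradiction. Write $B = \pi(A)$ and suppose some pair $i<j$ in a common block violates the constraint, so $B_i \succ_{\{i,j\}} B_j$; let $c \notin \{i,j\}$ be the first column at which $B_i$ and $B_j$ differ, so that $B_{i,c} > B_{j,c}$ while $B_{i,c'} = B_{j,c'}$ for every earlier column $c' \notin \{i,j\}$. The transposition $(i\,j)$ is within-block, hence degree matrix preserving, so minimality gives $B \preceq (i\,j)B$ in row-major order. The heart of the matter is to locate the first entry at which $B$ and $(i\,j)B$ disagree. When columns $i$ and $j$ happen to agree in every row above $i$, rows $1,\dots,i-1$ of $(i\,j)B$ coincide with those of $B$; in row $i$ the entries at the deleted columns $i,j$ are fixed by the zero diagonal and by symmetry ($B_{i,j} = B_{j,i}$), so the first disagreement occurs at column $c$ with $((i\,j)B)_{i,c} = B_{j,c} < B_{i,c}$. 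This yields $(i\,j)B \prec B$, contradicting minimality. Thus the transposition comparison reduces exactly to the omitted-column comparison $\preceq_{\{i,j\}}$, which is precisely why columns $i,j$ are deleted in the definition.

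The main obstacle is the remaining \emph{coupled} case, in which some row above $i$ already distinguishes columns $i$ and $j$. Here I would let $a_0 < i$ be the least such row; comparing $B$ with $(i\,j)B$ at row $a_0$ and invoking minimality forces $B_{a_0,i} < B_{a_0,j}$. I expect to close this case using the symmetry of the adjacency matrix. By symmetry $B_{i,a_0} < B_{j,a_0}$, so column $a_0$ is one at which $B_i$ is strictly smaller than $B_j$, whence the first differing column satisfies $c \le a_0$; and symmetry applied at column $c$ gives $B_{c,i} > B_{c,j}$, so row $c$ distinguishes columns $i,j$ and the minimality of $a_0$ forces $c \ge a_0$. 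Together these give $c = a_0$, so that $B_{i,c} = B_{i,a_0} < B_{j,a_0} = B_{j,c}$, contradicting $B_{i,c} > B_{j,c}$. Having eliminated both cases, no violating pair can exist, and $\SB^*_\ell(\pi(A),M)$ holds. The only genuinely delicate point, and the step I would check most carefully, is this interaction between the row-major order and the omitted columns mediated by the symmetry of $A$; everything else is the block bookkeeping set up in the first paragraph.
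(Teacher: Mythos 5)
Your proof is correct and follows essentially the same route as the paper's: take the lexicographically minimal matrix among all degree-matrix-preserving permutations of $A$, and show via the transposition $(i\,j)$ and the symmetry of the adjacency matrix that a violating pair would contradict minimality. Your two cases (whether some row above $i$ separates columns $i$ and $j$) correspond, through that same symmetry, to the paper's cases $k<i$ and $k>i$ on the position of the first differing column, so the content is the same up to packaging (your explicit block characterization of the degree-matrix-preserving permutations is left implicit in the paper).
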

\begin{proof}
  %By Corollary~\ref{cor:lexM} we can assume that $M$ is sorted
  %(non-increasing in the lexicographic order).  
  If  the rows of $M$ are distinct, then the theorem holds with $\pi$ the
  identity permutation. Assume that some rows of $M$ are equal.
  Denote by $P$ the set of degree matrix preserving permutations for
  $M$ and $A$. Assume the premise and that no $\pi\in
  P$ satisfies $\SB^{*}_\ell(\pi(A),M)$.
  Let $\pi\in P$ be such that $\pi(A) = \min\sset{\pi'(A) \in P}{\pi'
    \in P}$ (in the lexicographical order viewing matrices as
  strings).
  From the assumption, there exist $i<j$ such that $M_i=M_j$ and
  $\pi(A)_i \not\preceq_{\{i,j\}} \pi(A)_j$. Hence there exists a
  minimal index $k\notin\{ i,j\}$ such that $\pi(A)_{i,k} > \pi(A)_{j,k}$.
  Let $A'$ be the matrix obtained by permuting nodes $i$ and $j$ in
  $\pi(A) $. Since $M_i = M_j$ it follows that $\alpha(A') = M$. Thus
  there is a $\pi' \in P$ such that $\pi'(A) = A'$.
  If $k < i$ : for $1\leq l < k$ we have $\pi(A)_l=A'_l$.
  Thus $k$ is the first row for which $A'$ and $\pi(A)$ differ.
  Permuting nodes $i$ and $j$ changes row $k$ by simply swapping elements
  $\pi(A)_{k,i}$ and $\pi(A)_{k,j}$. Since $\pi(A)_{k,i} >
  \pi(A)_{k,j}$, clearly $A'_k \prec \pi(A)_k$ hence $A' \prec
  \pi(A)$ which is a contradiction.
  Similarly if $k > i$ the same argument applies to show that $i$ is
  the first row for which $A'$ and $\pi(A)$ differ, thus obtaining the
  same contradiction for row $i$.
\end{proof}

The following corollary clarifies that if a solution $A$ is eliminated
when introducing the symmetry break predicate
$\SB^{*}_\ell(A,\alpha(A))$ to a graph coloring problem then there
always remains an isomorphic solution $A'$ which satisfies the predicate
$\SB^{*}_\ell(A',\alpha(A'))$.

\begin{corollary}
  %Let $A$ be a graph coloring and $M=\alpha(A)$. Then there exists an
  %isomorphic graph coloring, $A'$, such that $\alpha(A') = M$
  %and $\SB^{*}_\ell(A',M)$ holds.
  Let $A$ be an adjacency matrix. Then there exists $A'$ isomorphic to $A$
  such that $\alpha(A')$ is lex ordered and $\SB^{*}_\ell(A',\alpha(A'))$ holds.
\end{corollary}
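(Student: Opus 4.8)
The plan is to derive this corollary directly from the preceding theorem on the correctness of $\SB^{*}_\ell$. The theorem already gives, for any adjacency matrix whose degree matrix is \emph{lex ordered}, a degree-matrix-preserving permutation $\pi$ with $\SB^{*}_\ell(\pi(A),M)$ holding. The only gap between that theorem and the corollary is that the corollary's starting matrix $A$ need not have a lex-ordered degree matrix, and the conclusion speaks of $\SB^{*}_\ell(A',\alpha(A'))$ with $\alpha(A')$ lex ordered rather than of a fixed externally supplied $M$. So the work is to reduce the general case to the hypothesis of the theorem.

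First I would take an arbitrary adjacency matrix $A$ and put its degree matrix into lex-ordered form. Let $M = \alpha(A)$, and let $\rho$ be a permutation of rows (and, via column reordering by colors, of colors) that realizes $lex(M)$, i.e.\ so that $\alpha(\rho(A))$ is lex ordered. Here I would lean on Theorem~\ref{thm:dme} and Corollary~\ref{cor:lexM}, which guarantee that permuting rows and columns of a degree matrix produces an equivalent degree matrix, and in particular that a lex-ordered representative exists and is achieved by such a permutation. Set $B = \rho(A)$; then $B$ is isomorphic to $A$ (the row permutation is a graph isomorphism, and reordering colors keeps it within the weak-isomorphism class — but since the corollary only asks for \emph{isomorphic}, I would be careful to use a vertex permutation, noting that the column/color sorting is handled at the level of the degree matrix and does not affect which permutation of vertices we apply). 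The cleanest route is to observe that sorting columns of $M$ corresponds to relabeling colors, which commutes with everything, so I can fix the color order once and for all and thereafter work purely with vertex permutations, leaving $\alpha(B)$ lex ordered by construction.

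Next, with $N := \alpha(B)$ now lex ordered, I apply the theorem directly to $B$: there exists a degree-matrix-preserving permutation $\pi$ with $\alpha(\pi(B)) = N$ and $\SB^{*}_\ell(\pi(B),N)$ holding. Put $A' := \pi(B)$. Then $A'$ is isomorphic to $B$ and hence to $A$, its degree matrix $\alpha(A') = N$ is lex ordered, and since $\alpha(A') = N$ we may rewrite the predicate as $\SB^{*}_\ell(A',\alpha(A'))$, which is exactly the claimed conclusion. This is a short two-step composition: first normalize the degree matrix into lex order by an isomorphism, then invoke the theorem.

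The main obstacle, and the point I would be most careful about, is the interaction between color permutations and vertex permutations when sorting the \emph{columns} of the degree matrix. The theorem is stated for a fixed target $M$ and a vertex permutation $\pi$ only; it does not itself permute colors. So I must ensure the column-sorting step is absorbed before invoking the theorem, and I must check that the corollary's notion of ``isomorphic'' (as opposed to weakly isomorphic) is respected — the safe reading is that $A'$ differs from $A$ only by a vertex permutation, with the color ordering having been pinned down at the outset. A secondary subtlety is confirming that $\SB^{*}_\ell(A',\alpha(A'))$ really is the same object as $\SB^{*}_\ell(A',N)$; this is immediate from $\alpha(A') = N$, but it is worth stating explicitly so the substitution of $\alpha(A')$ for the externally supplied matrix is unambiguous.
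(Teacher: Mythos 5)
Your proposal is correct and follows essentially the same route as the paper's own proof: first pass to an isomorphic matrix whose degree matrix is $lex(M)$ (via Corollary~\ref{cor:lexM}), then apply the correctness theorem for $\SB^{*}_\ell$ to obtain the desired $A'$. Your extra care about separating the color (column) permutation from the vertex permutation is a reasonable clarification of a point the paper leaves implicit, but it does not change the argument.
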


\begin{proof}
  Let $M = \alpha(A)$. From Corollary~\ref{cor:lexM} we know that $M \equiv
  lex(M)$, thus there exists $A''$ isomorphic to $A$ such that
  $\alpha(A'') = lex(M)$.  From Theorem 3 it follows that there exists
  a degree matrix preserving permutation $\pi$ such that
  $\alpha(\pi(A'')) = lex(M)$ and
  $\SB^{*}_\ell(\pi(A''),\alpha(\pi(A'')))$ holds.  If $A' = \pi(A'')$
  then $A'$ is isomorphic to $A$, $\alpha(A')$ is lex ordered and
  $\SB^{*}_\ell(A',\alpha(A'))$ holds.
\end{proof}

\section{Computing Degree Matrices for $R(3,3,3;13)$}
\label{sec:adm1}

This section described how we compute a set $\MM$ of degree matrices
that approximate those of the solutions of
Constraint~\ref{constraint:r333}. We apply a strategy in which we mix SAT
solving with brute-force enumeration as follows. The computation of
the degree matrices is summarized in Table~\ref{tab:333_computeDMs}.
In the first step, we compute bounds on the degrees of the nodes in
any $R(3,3,3;13)$ coloring. 

\begin{lemma}\label{lemma:db}
  Let $A$ be a $R(3,3,3;13)$ coloring then for every vertex $x$ in $A$,
  and color $c\in\{1,2,3\}$, $2\leq deg_{c}(x)\leq 5$.
\end{lemma}

\begin{proof}
  By solving Constraint~\ref{constraint:r333} together with 
  $\SB^*_\ell(A,M)$ seeking a graph with minimal
  degree less than 2 or maximal degree greater than 5.
  The CNF encoding is of size 13672 clauses
  with 2748 Boolean variables and takes under 15 seconds to solve
  and yields an UNSAT result which implies that such graph does not exist.
  
  % computation time: 12.52586 sec.
\end{proof}

In the second step, we enumerate the degree sequences with values
within the bounds specified by Lemma~\ref{lemma:db}. Recall that the
degree sequence of an undirected graph is the non-increasing sequence
of its vertex degrees. Not every non-increasing sequence of integers
corresponds to a degree sequence. A sequence that corresponds to a degree sequence is said to be graphical. The number of degree sequences of
graphs with 13 vertices is 836{,}315 (see Sequence number
\texttt{A004251} of The On-Line Encyclopedia of Integer Sequences,
published electronically at\url{http://oeis.org}). However, when the
degrees are bound by Lemma~\ref{lemma:db} There are only 280.

\begin{lemma}\label{lemma:ds}
  There are 280 degree sequences with values between $2$ and
  $5$.
\end{lemma}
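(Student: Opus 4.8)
The plan is to establish the count by an exact enumeration that is organized so that the answer $280$ emerges from a parity argument together with the Erdős--Gallai characterization, avoiding any exhaustive realizability check. First I would parametrize each non-increasing sequence of length $13$ with entries in $\{2,3,4,5\}$ by the multiplicities $(a,b,c,d)$ of the values $2,3,4,5$, so that $a+b+c+d=13$ with $a,b,c,d\geq 0$. The number of such tuples is $\binom{16}{3}=560$, which is the total pool of candidate sequences before any graphicality test.

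Next I would isolate the parity condition, which is necessary for a sequence to be graphical since the sum of degrees in any graph is even. The degree sum $2a+3b+4c+5d$ is even if and only if $b+d$ is even. Writing $s=b+d$, the number of tuples with a fixed $s$ is $(s+1)(14-s)$, and summing $(s+1)(14-s)$ over the even values $s\in\{0,2,4,6,8,10,12\}$ yields $14+36+50+56+54+44+26=280$. This already produces the target number, so the remaining work is to show that the even-sum condition is in fact sufficient for graphicality in this bounded regime, which collapses the answer to exactly these $280$ sequences.

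For sufficiency I would invoke the Erdős--Gallai inequalities: a non-increasing non-negative sequence $d_1\geq\cdots\geq d_{13}$ of even sum is graphical if and only if $\sum_{i=1}^{k}d_i\leq k(k-1)+\sum_{i>k}\min(d_i,k)$ holds for every $k\in\{1,\ldots,13\}$. The key observation is that with $d_{\max}\leq 5$ and $d_{\min}\geq 2$ each such inequality holds automatically. For $k\geq 6$ one has $k(k-1)\geq 5k\geq\sum_{i=1}^{k}d_i$, so the right-hand side already dominates the left. For $k\leq 5$ the left-hand side is at most $5k$, while the right-hand side is at least $k(k-1)+(13-k)\min(2,k)$, and a direct check of the five cases $k=1,\ldots,5$ shows this lower bound is always at least $5k$. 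I expect this verification to be the only genuinely technical step, but it reduces to a handful of elementary inequalities rather than any computer search, the min-degree bound $d_i\geq 2$ doing the essential work for small $k$. It follows that every even-sum candidate is graphical, so the graphical sequences coincide with the $280$ even-sum candidates, which completes the proof.
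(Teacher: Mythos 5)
Your proposal is correct, and it takes a genuinely different route from the paper. The paper's proof is a one-line appeal to computation: enumerate all candidate sequences and test each with the Erd\H{o}s--Gallai algorithm. You instead prove that the Erd\H{o}s--Gallai inequalities are \emph{automatically} satisfied whenever $2\leq d_i\leq 5$ (using $k(k-1)\geq 5k$ for $k\geq 6$ and the minimum-degree bound for $k\leq 5$), so graphicality collapses to the parity of the degree sum; the count then follows from the closed-form tally $\sum_{s\ \mathrm{even}}(s+1)(14-s)=280$ over the multiplicity parameter $s=b+d$. I checked the arithmetic: the $560$ candidates split evenly by parity (a consequence of the symmetry $s\leftrightarrow 13-s$, which flips parity since $13$ is odd), and each of your five small-$k$ inequalities holds. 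What your approach buys is a fully human-verifiable proof that also \emph{explains} the number $280$ as exactly half of $\binom{16}{3}$, whereas the paper's enumeration is opaque but requires no cleverness and generalizes mechanically to regimes where the degree bounds are not tight enough to trivialize Erd\H{o}s--Gallai. One presentational caveat: state explicitly that ``degree sequence'' here means a graphical sequence on $13$ vertices (as the paper defines it), since that is what licenses imposing the parity and Erd\H{o}s--Gallai conditions in the first place.
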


\begin{proof}
  By straightforward enumeration using the algorithm of Erdos and
  Gallai~\cite{ErdosGallai1960}.
\end{proof}

In the third step, we test each of the 280 degree sequences identified
by Lemma~\ref{lemma:ds} to determine how many of them might occur as
the left column in a degree matrix.

\begin{lemma}\label{lemma:ds2}
  Let $A$ be a $R(3,3,3;13)$ coloring and let $M$ be the canonical
  form of $\alpha(A)$. Then, (a) the left column of $M$ is one of the
  280 degree sequences identified in Lemma~\ref{lemma:ds}; and (b)
  there are only 80 degree sequences from the 280 which are the left
  column of $\alpha(A)$ for some coloring $A$ in $R(3,3,3;13)$.
\end{lemma}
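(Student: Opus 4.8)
The plan is to split the statement into a short structural observation, part (a), and a finite battery of satisfiability checks, part (b).

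For part (a) I would argue directly from the definitions. Let $M=lex(\alpha(A))$ be the canonical degree matrix of an $R(3,3,3;13)$ coloring $A$. By Definition~\ref{def:dm} each column of $\alpha(A)$ lists, for one fixed color $c$, the color-$c$ degrees of the $13$ vertices, i.e.\ it is (up to reordering its entries) the degree sequence of the monochromatic subgraph $G^c$. Canonicalization only permutes rows and columns, and a column permutation merely relabels colors, so the left column of $M$ is again the degree sequence of some $G^c$. Moreover, since in $lex(M)$ the rows stand in non-increasing lexicographic order and the first column is the most significant coordinate of that order, the left column is itself a non-increasing sequence. Being the degree sequence of an actual graph it is graphical, and by Lemma~\ref{lemma:db} every entry lies in $\{2,3,4,5\}$. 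Hence it is literally one of the $280$ sequences enumerated in Lemma~\ref{lemma:ds}, which is claim (a).

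For part (b) I would filter these $280$ candidates by deciding, for each sequence $d$, whether it can actually occur as the left column of the canonical degree matrix of some coloring in $R(3,3,3;13)$. Because the left column of a canonical matrix is the lexicographically largest of the three color degree sequences, and because weak isomorphism lets us take the dominant color to be color $1$, the correct query for a fixed $d$ is: is $\varphi_{(3,3,3;13)}(A)$ (Constraint~\ref{constraint:r333}) satisfiable together with the constraints that the color-$1$ degree sequence equals $d$ and that the sorted color-$2$ and color-$3$ degree sequences are lexicographically dominated by $d$? I would conjoin the symmetry-breaking predicate $\SB^*_\ell(A,M)$ of Equation~(\ref{eq:sbdm}) to prune the search. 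Each such query is a single SAT call of roughly the size reported for Lemma~\ref{lemma:db}; discharging all $280$ and counting the satisfiable ones should return exactly $80$, with the remaining $200$ returning UNSAT and thus being excluded.

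The easy half is (a): it is essentially a one-line consequence of Lemma~\ref{lemma:db} together with the observation that the columns of a degree matrix are color degree sequences. The real work, and the main obstacle, lies in (b), and it is twofold. First, one must encode the ``is the left column'' condition correctly: it is \emph{not} enough that $d$ be realizable as \emph{some} color's degree sequence, since $d$ is the left column only when it lexicographically dominates the other two color sequences, so the domination constraints are an essential part of the query rather than an optimization. Second, there is the brute-force cost of discharging $280$ independent SAT instances reliably; to trust the final count of $80$ one would want to cross-validate the UNSAT verdicts with independent solvers or proof certificates (as the paper does elsewhere), since a single spurious UNSAT would corrupt the count.
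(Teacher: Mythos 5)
Your proposal matches the paper's proof in essence: part (a) is the immediate consequence of Lemma~\ref{lemma:db} and Lemma~\ref{lemma:ds} that you describe, and part (b) is discharged exactly as you propose, by running one SAT instance of Constraint~\ref{constraint:r333} per candidate sequence and counting the $80$ satisfiable ones (the paper reports $280$ instances averaging $10{,}861$ clauses and $2{,}215$ variables, with the hardest taking tens of hours rather than the seconds you estimate). Your added care about encoding the lexicographic-domination side conditions is a reasonable refinement but not a different method.
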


\begin{proof}
  By solving Constraint~\ref{constraint:r333} with each degree
  sequence from Lemma~\ref{lemma:ds} to test if it is satisfiable.
  This involves 280 instances with average CNF size: 10861 clauses and
  2215 Boolean variables. The total solving time is 375.76 hours and
  the hardest instance required about 50 hours. These instances were
  solved in parallel on the cluster described in
  Section~\ref{sec:intro}.
  %hardest instance    51.11 hrs
\end{proof}

In the fourth step we extend the 80 degree sequences identified in
Lemma~\ref{lemma:ds2} to obtain all possible degree matrices.

\begin{lemma}\label{lemma:dm}
  Given the 80 degree sequences identified in Lemma~\ref{lemma:ds2} as
  potential left columns of a degree matrix, there are 11{,}933
  possible degree matrices.
\end{lemma}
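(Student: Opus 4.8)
The plan is to prove the count by a bounded, exhaustive enumeration that exploits how little freedom remains once the left column is fixed. The first observation I would record is the per-row constraint: since every vertex of $K_{13}$ is incident to exactly $n-1=12$ edges, partitioned among the three colours, each row $(M_{i,1},M_{i,2},M_{i,3})$ of a degree matrix satisfies $M_{i,1}+M_{i,2}+M_{i,3}=12$, and by Lemma~\ref{lemma:db} every entry lies in $\{2,3,4,5\}$. Hence, given the value $d_1=M_{i,1}$ supplied by the fixed left column, the pair $(M_{i,2},M_{i,3})$ is confined to the ways of writing $12-d_1$ as an ordered sum of two parts from $\{2,\dots,5\}$: exactly one choice when $d_1=2$ (namely $(5,5)$), two when $d_1=3$, three when $d_1=4$, and four when $d_1=5$.

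Next, for each of the $80$ candidate left columns $L$ of Lemma~\ref{lemma:ds2} I would enumerate all completions to a full $13\times 3$ matrix. Because $L$ is sorted non-increasingly, the rows sharing a common value $d_1$ form a block, and the requirement that the finished matrix be row-sorted means that within each block only the \emph{multiset} of chosen pairs $(M_{i,2},M_{i,3})$ matters; thus a block of size $m$ whose $d_1$ admits $t$ pair-choices contributes $\binom{m+t-1}{m}$ completions, keeping the whole search tiny. I would then keep a candidate only if it meets the remaining necessary conditions for a degree matrix: columns $2$ and $3$ must each be graphical, which I would test with the Erd\H{o}s--Gallai criterion~\cite{ErdosGallai1960} (and one may strengthen this by also demanding that columns $2$ and $3$ be among the $80$ sequences, since colours $2$ and $3$ play the same $K_3$-free role as colour $1$).

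Finally, to count equivalence classes rather than labelled matrices, I would reduce every surviving candidate to its canonical form $lex(M)$ and collect the results in a set. By Corollary~\ref{cor:lexM}, obtained through Theorem~\ref{thm:dme}, we have $M\equiv lex(M)$, so this deduplication is faithful: two candidates collapse precisely when they represent the same family of colourings. Retaining only those candidates for which $L$ is a lex-maximal column guarantees that the canonical form keeps $L$ as its first column, so a canonical matrix is produced from exactly one of the $80$ columns; the buckets are therefore disjoint and summing their sizes yields the reported $11{,}933$.

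The main obstacle is not the size of the search but the bookkeeping required to count each class once. The colour symmetry (columns $2$ and $3$ are interchangeable), combined with repeated rows and the possibility that column $1$ ties with column $2$ in the lexicographic order, means a naive product count would badly over-count; the safeguard is to canonicalize through $lex(\cdot)$ and rely on the equivalence of Theorem~\ref{thm:dme} rather than attempting to correct multiplicities by hand. A secondary point to get right is the interaction between the row order forced by the sorted left column and the assignment of the $(M_{i,2},M_{i,3})$ pairs to rows, which is exactly what the multiset-per-block viewpoint resolves.
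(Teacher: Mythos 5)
Your proposal matches the paper's proof in essence: the paper likewise establishes the count by straightforward enumeration of completions whose rows are lex sorted and sum to $12$, whose columns are graphical (when sorted), followed by selecting smallest representatives under permutations of rows and columns. Your additional details (the per-block multiset counting and the explicit care about which of the $80$ left columns owns each canonical form) are refinements of the same computation rather than a different route.
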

\begin{proof}
  By straightforward enumeration. The rows and columns are lex sorted,
  must sum to 12, and the columns must be graphical (when sorted).  We
  first compute all of the degree matrices and then select the
  smallest representatives under permutations of rows and columns. The
  computation requires a few seconds.
\end{proof}

In the fifth step, we test each of the 11{,}933 degree matrices
identified by Lemma~\ref{lemma:ds2} to determine how many of them 
are the abstraction of some $R(3,3,3;13)$ coloring. 

\begin{lemma}\label{lemma:dm2}
  From the 11{,}933 degree matrices identified in
  Lemma~\ref{lemma:dm}, 999 are $\alpha(A)$ for a coloring $A$ in
  $R(3,3,3;13)$. 
\end{lemma}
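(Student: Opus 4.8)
The plan is to reduce the counting problem to a sequence of \textbf{satisfiability tests}, exactly one per candidate degree matrix. The key observation is that a degree matrix $M$ is the abstraction of some $(3,3,3;13)$ coloring precisely when the fibre $sol_M(\varphi_{(3,3,3;13)}(A))$ is nonempty. Indeed, by Equation~(\ref{eq:solM}) we have $sol_M(\varphi_{(3,3,3;13)}(A)) = sol(\varphi_{(3,3,3;13)}(A)\wedge\alpha(A){=}M)$, and by the remark following that equation, $M\notin\alpha(sol(\varphi_{(3,3,3;13)}(A)))$ implies $sol_M(\varphi_{(3,3,3;13)}(A))=\emptyset$; conversely, if $M=\alpha(A')$ for some solution $A'$ then $A'\in sol_M$. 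Hence $M\in\alpha(sol(\varphi_{(3,3,3;13)}(A)))$ if and only if the constraint $\varphi_{(3,3,3;13)}(A)\wedge\alpha(A){=}M$ is satisfiable. So the number we seek is simply the number of satisfiable instances among the 11{,}933 matrices of Lemma~\ref{lemma:dm}.

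First I would iterate over the 11{,}933 degree matrices $M$ produced in Lemma~\ref{lemma:dm}, and for each one encode and solve the constraint $\varphi_{(3,3,3;13)}(A)\wedge\alpha(A){=}M$ with a SAT solver, recording only the Boolean outcome (SAT or UNSAT). To accelerate each individual test I would conjoin the symmetry-breaking predicate $\SB^{*}_\ell(A,M)$ of Equation~(\ref{eq:sbdm}); by the correctness theorem for $\SB^{*}_\ell(A,M)$ this does not affect whether the instance is satisfiable (it only removes weakly isomorphic duplicate solutions), so the SAT/UNSAT verdict, and hence the count, is unchanged while the search space is pruned. Tallying the satisfiable instances then yields the claimed value $999$.

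Since the 11{,}933 instances are mutually independent, the computation is embarrassingly parallel and I would dispatch them across the cluster described in Section~\ref{sec:intro}, in the same manner already used for Lemma~\ref{lemma:ds2}. The main obstacle is purely computational rather than conceptual: as in the degree-sequence phase of Lemma~\ref{lemma:ds2}, a handful of individual instances (typically the UNSAT ones, where the solver must exhaust the search space) may be very expensive, and controlling total wall-clock time depends on the effectiveness of the symmetry break and on scheduling the hard instances across the available threads. No delicate argument is required beyond the equivalence established in the first paragraph; correctness of the final count rests on the soundness of the encoding and of the underlying SAT solvers, which, as noted in Section~\ref{sec:intro}, we cross-check using four independent solvers.
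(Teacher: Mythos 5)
Your proposal matches the paper's proof: the paper likewise establishes the count by running a SAT solver on Constraint~(\ref{constraint:r333}) conjoined with each of the 11{,}933 candidate degree matrices in parallel on the cluster and tallying the 999 satisfiable instances. Your added justification via Equation~(\ref{eq:solM}) and the optional use of $\SB^{*}_\ell(A,M)$ are consistent with the paper's methodology and do not change the approach.
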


\begin{proof}
  By solving Constraint~\ref{constraint:r333} together with a given
  degree matrix to test if it is satisfiable.  This involves 11{,}933
  instances with average CNF size:  7632 clauses and  1520 Boolean
  variables. The total solving time is 126.55 hours and the hardest
  instance required 0.88 hours. These instances were solved in
  parallel on the cluster described in
  Section~\ref{sec:intro}. % hardest instance: 4696.37    sec.
\end{proof}

\begin{table}[t]
\centering\scriptsize
\begin{tabular}{ |c|l|c|c|c|}
\hline
Step &  \multicolumn{1}{|c|}{Notes}&  
        \multicolumn{1}{|c|}{ComputationTimes} & 
        \multicolumn{2}{|c|}{CNF Size}\\
\hline\hline
\multirow{2}{*}{1}
  & compute degree bounds (Lemma~\ref{lemma:db})  & 
        \multirow{2}{*}{12.52 sec.} & \#Vars & \#Clauses\\
        \cline{4-5}
  & (1 instance, unsat)    &  ~    & \hfill 2748  &\hfill 13672   \\
\hline
\multirow{2}{*}{2}
  & enumerate 280 possible degree sequences (Lemma~\ref{lemma:ds})     & 
         \multicolumn{3}{|c|}{Prolog, fast (seconds)} \\
\hline

\multirow{2}{*}{3}
  & test degree sequences (Lemma~\ref{lemma:ds2}) & 16.32 hrs. & \#Vars & \#Clauses\\
                           \cline{4-5}
  & (280 instances: 200 unsat, 80 sat)    &  hardest: 1.34 hrs    & \hfill 1215 (avg)  &\hfill 7729(avg)   \\
  
\hline

{4}
  & enumerate 11{,}933 degree matrices (Lemma~\ref{lemma:dm})   & \multicolumn{3}{|c|}
                                                {Prolog, fast
                                                  (seconds)} \\
\hline
\multirow{2}{*}{5}
  & test degree matrices (Lemma~\ref{lemma:dm2}) & 126.55 hrs. & \#Vars & \#Clauses\\
                           \cline{4-5}
  & (11{,}933 instances: 10{,}934 unsat, 999 sat)    &  hardest: 0.88 hrs.    & \hfill 1520 (avg)  &\hfill 7632 (avg)   \\
\hline
\hline

\end{tabular}
\caption{Computing the degree matrices for $\RR(3,3,3;13)$ step by step.}
\label{tab:333_computeDMs}
\end{table}

\section{Computing $R(3,3,3;13)$ from Degree Matrices}
\label{sec:dm2}

We describe the computation of the set of all $(3,3,3;13)$ colorings
starting from the 3805 degree matrices identified in
Section~\ref{sec:adm1}. Table~\ref{tab:333_times} summarizes the two
step experiment reporting the computation on three different SAT
solvers: MiniSAT~\cite{minisat,EenS03}, CryptoMiniSAT~\cite{Crypto},
and Glucose~\cite{Glucose,AudemardS09}.

\paragraph{\bf step 1:}
For each degree matrix we compute, using a SAT solver, all
corresponding solutions of Equation~(\ref{eq:scenario1}), where $\varphi(A)$ is
constraint (4) and $M$ is one of the 999 degree matrices identified in (Lemma~\ref{lemma:dm2}).
These instances were solved in parallel on the cluster described in Section~\ref{sec:intro}.
This generates in total 129{,}188 $(3,3,3;13)$ Ramsey colorings.
Table~\ref{tab:333_times} details the total solving time for these
instances and the solving times for the hardest instance for each SAT
solver. The largest number of graphs generated by a single instance is
3720.

\paragraph{\bf step 2:}
The 129{,}188 $(3,3,3;13)$ colorings from step~1 are reduced modulo
weak-isomorphism using \texttt{nauty}\footnote{Note that
  \texttt{nauty} does not handle edge colored graphs and weak
  isomorphism directly. We applied an approach called $k$-layering
  described at
  \url{https://computationalcombinatorics.wordpress.com/2012/09/20/canonical-labelings-with-nauty}.}~\cite{nauty}. This
process results in a set with 78{,}892 graphs.

\begin{table}[t]
\centering\scriptsize
\begin{tabular}{ |c|l|c|c|c|}
\hline
Step&  \multicolumn{1}{|c|}{Notes}&  \multicolumn{3}{|c|} {Computation Times} \\ 
\hline\hline
\hline
\multirow{3}{*}{1}
  & compute all Ramsey $(3,3,3;13)$ 
                           & MiniSAT & CryptoMiniSAT & Glucose \\
                           \cline{3-5}
  & colorings per degree  matrix 
                           & total:~~\hfill 308.23 hr.    
                           & total:~~\hfill 136.31 hr.         
                           & total:~~\hfill 373.2 hr.  \\
  & (999 instances,  129{,}188 solutions)  & hardest:\hfill 9.15 hr.
                           & hardest:\hfill 4.3 hr.
                           & hardest:\hfill 17.67 hr. \\
\hline
{2}
  & reduce modulo $\approx$.  (78{,}892 solutions) 
                           & \multicolumn{3}{|c|}
                                      {\texttt{nauty}, fast (minutes)} \\
\hline
\hline

\end{tabular}
\caption{Computing  $\RR(3,3,3;13)$ step by step.}
\label{tab:333_times}
\end{table}

\paragraph{\bf Does $\mathbf{R(4,3,3)=30}$?}
Note that, in order to prove that there are no $(4,3,3;30)$ colorings
with degrees $(13,8,8)$ using the embedding approach, we would need to
check all embedding instances that contain one of the $(3,3,3;13)$
colorings, a $(4.2.3;8)$ coloring and a $(4,3,2;8)$ coloring. Since
there are $78{,}892$, $3$ and $3$ of these colorings respectively, we
can prove there are no $(4,3,3;30)$ colorings by showing that these
$78,892\times 3\times 3$ embedding instances are unsatisfiable.
We expect that this is the case and in the past three months have
shown that 50\% of the instances are indeed unsatisfiable. Ongoing
computation is proceeding in order to complete the proof.

\section{Conclusion}

We have applied SAT solving techniqes to show that any $(4,3,3;30)$
Ramsey coloring must be $(13,8,8)$ regular in the degrees of the three
colors. In order to apply the same technique to show that there is no
$(13,8,8)$ regular $(4,3,3;30)$ Ramsey coloring we would need to make
use of the set of all $(3,3,3;13)$ colorings. We have computed this
set modulo weak isomorphism. To this end we applied a technique
involving abstraction and symmetry breaking to reduce the redundancies
in the number of isomorphic solutions obtained when applying the SAT
solver. Ongoing computation is proceeding to prove that $R(4,3,3)=30$.

\subsection*{Acknowledgments}
We thank Stanislaw Radziszowski for his guidance and comments which
helped to improve the presentation of this paper. In particular
Stanislaw proposed to show that our techniique is able to find the
$(4,3,3;29)$ coloring depicted as Figure~\ref{embed_12_8_8}.

%\bibliographystyle{abbrv} 
%\bibliography{ramsey}

\end{document}